\newtheorem{theorem}{Theorem}
\newtheorem{definition}{Definition}
\newtheorem{example}{Example}
\newtheorem{proposition}[theorem]{Proposition}
\title{Preservation of Semantic Properties during the Aggregation of Abstract Argumentation Frameworks}
\author{Weiwei Chen
\institute{Institute of Logic and Cognition and Department of Philosophy\\
Sun Yat-sen University, P.R.\ China}
\institute{ILLC, University of Amsterdam, The Netherlands}
\email{chenww26@mail2.sysu.edu.cn}
\and
Ulle Endriss
\institute{Institute for Logic, Language and Computation\\
University of Amsterdam, The Netherlands}
\email{ulle.endriss@uva.nl}}
\newcommand{\myendofexample}{~\hfill$\vartriangle$}
\newcommand{\myendofproof}{~\hfill$\Box$}
\renewcommand{\geq}{\geqslant}
\renewcommand{\leq}{\leqslant}
\renewcommand{\phi}{\varphi}
\newcommand{\prof}[1]{{\bm{#1}}}
\newcommand{\AF}{\text{\it AF}}   
\newcommand{\Arg}{\text{\it Arg}}          
\newcommand{\attacks}{\rightharpoonup}     
\newcommand{\att}{\text{\it att}}  
\newcommand{\Att}{\text{\it Att}}  
\newenvironment{AFfour}[1][]%
{\begin{tikzpicture}[->,>=latex,thick,shorten >=1pt,font=\small,scale=#1]
\node[draw,circle] (A) at (0,1.5) {$A$};
\node[draw,circle] (B) at (1.5,1.5) {$B$};
\node[draw,circle] (C) at (1.5,0) {$C$};
\node[draw,circle] (D) at (0,0) {$D$};}%
{\end{tikzpicture}}
\newcommand{\drawattack}[2]{\draw[->] (#1) edge (#2);}
\newcommand{\drawlabelledattack}[4]{\draw[->,dotted] (#1) edge node[#4] {{\footnotesize #3}} (#2);}
\begin{document}
\maketitle

\begin{abstract}
An abstract argumentation framework can be used to model the argumentative stance of an agent at a high level of abstraction, by indicating for every pair of arguments that is being considered in a debate whether the first attacks the second. When modelling a group of agents engaged in a debate, we may wish to aggregate their individual argumentation frameworks to obtain a single such framework that reflects the consensus of the group. Even when agents disagree on many details, there may well be high-level agreement on important semantic properties, such as the acceptability of a given argument. Using techniques from social choice theory, we analyse under what circumstances such semantic properties agreed upon by the individual agents can be preserved under aggregation.   
\end{abstract}

\section{Introduction}
% INTRODUCING ABSTRACT ARGUMENTATION
Formal argumentation theory provides tools for modelling both the arguments an agent may wish to employ in a debate and the relationships that hold between such arguments~\cite{BesnardHunter2008,ModgilEtAlTAFA2011,RahwanSimari2009}. In the widely used system of \emph{abstract argumentation}, introduced in the seminal work of Dung~\cite{DungAIJ1995}, we abstract away from the internal structure of arguments and only model whether or not one argument \emph{attacks} another argument. This is a useful perspective when we require a high-level understanding of how different arguments relate to each other.
%
%INTRODUCING COLLECTIVE ARGUMENTATION
But when several agents engage in a debate, they may differ on their assessment of some of the arguments and their relationships. How best to model such scenarios of \emph{collective argumentation} is a question of considerable interest. Over the past decade or so, several authors have started to contribute to its resolution (see, e.g.,~\cite{AiriauEtAlAAMAS2016,BodanzaEtAlAC2017,CaminadaPigozziJAAMAS2011,CosteMarquisEtAlAIJ2007,DelobelleEtAlIJCAI2015,DunneEtAlCOMMA2012,RahwanTohmeAAMAS2010,TohmeEtAlFoIKS2008}).

% INTRODUCING OUR RESEARCH QUESTION
Specifically, when agents differ on their assessment of which attacks between the arguments are in fact justified, i.e., when they put forward different \emph{attack-relations}, we may wish to \emph{aggregate} these individual pieces of information to obtain a global view. 
In this paper, we analyse under what circumstances a given \emph{aggregation rule} will \emph{preserve} relevant properties of the individual attack-relations, particularly properties that relate to the various \emph{semantics} that have been proposed for abstract argumentation. 
For example, if all agents agree that argument~$A$ is \emph{acceptable}, either because it is not attacked by any other argument or because it can be successfully defended against any such attack, then we would like $A$ to also be considered acceptable relative to the attack-relation returned by our aggregation rule. Thus, \emph{argument acceptability} is an example for a property that, ideally, should be preserved under aggregation.

% METHODOLOGY: SOCIAL CHOICE THEORY
Our approach uses techniques originating in \emph{social choice theory}, the study of collective decision making~\cite{ArrowEtAlHBSCW2002,Gaertner2006}, the relevance of which to collective argumentation has previously been noted by several of the aforementioned authors, starting with Tohm\'e et al.~\cite{TohmeEtAlFoIKS2008}. 
In particular, we make use of recent results on \emph{graph aggregation}~\cite{EndrissGrandiAIJ2017}.
%
% CONTRIBUTION
Besides the formulation of a clear and simple model for the axiomatic study of the preservation of semantic properties during aggregation,
%the aggregation of attack-relations over a common set of arguments, 
our contribution consists in delineating how fundamental axiomatic properties of aggregation rules interact with such preservation requirements. Our technical results range from characterisation results that indicate what kind of aggregation rule can satisfy certain combinations of desiderata, to impossibility results that show that only aggregation rules that are clearly unacceptable from an axiomatic point of view (namely, so-called dictatorships) can preserve the most demanding semantic properties.

This paper is organised as follows. Section~\ref{sec:argumentation} is a brief review of relevant concepts from the theory of abstract argumentation. Section~\ref{sec:model} introduces our model and Section~\ref{sec:results} present our technical results on the preservation of semantic properties under aggregation. Finally, we discuss related work in Section~\ref{sec:related-work} and conclude, in Section~\ref{sec:conclusion}, with suggestions for possible directions for future work.

%%%%%%%%%%%%%%%%%%%%%%%%%%%%%%%%%%%%%%%%%%%%%%%%%%%%%%%%%%%%%%%%%%%%%%%%%%%%%%%%
\section{Abstract Argumentation}\label{sec:argumentation}
%%%%%%%%%%%%%%%%%%%%%%%%%%%%%%%%%%%%%%%%%%%%%%%%%%%%%%%%%%%%%%%%%%%%%%%%%%%%%%%%

In this section, we recall some of the fundamentals of abstract argumentation as originally introduced by Dung~\cite{DungAIJ1995}.
An \emph{argumentation framework} is a pair $\AF=\langle\Arg,\attacks\rangle$, where $\Arg$ is a finite set of \emph{arguments} and $\attacks$ is an irreflexive binary relation on~$\Arg$.\footnote{Neither the finiteness nor the irreflexivity assumption are crucial for our results, but they simplify exposition and clearly are natural for most applications.} If $A\attacks B$ holds for two arguments $A,B\in\Arg$, we say that $A$ \emph{attacks} $B$. An \emph{attack-relation} $(\attacks)\subseteq\Arg\times\Arg$ is made up of a set of \emph{attacks} $\att\in\Arg\times\Arg$. For a set of arguments $\Delta\subseteq\Arg$ and an argument $B\in\Arg$, we say that $\Delta$ attacks $B$, denoted as $\Delta\attacks B$, if $A\attacks B$ holds for some argument $A\in\Delta$. 
Given an argumentation framework~$\AF$, the question arises which arguments to accept. For example, we may not want to accept two arguments that attack each other. A \emph{semantics} for abstract argumentation specifies which sets of arguments can be accepted together. Any such set of arguments is called an \emph{extension} of $\AF$ under the semantics in question.

For all the definitions of specific choices of semantics that follow, 
consider an arbitrary but fixed argumentation framework $\AF=\langle\Arg,\attacks\rangle$
%We first introduce some basic terminology.
and a set of arguments $\Delta\subseteq\Arg$.
We say that $\Delta$ is \emph{conflict-free}, if there exist no arguments $A,B\in\Delta$ such that $A\attacks B$.
We further say that $\Delta$ \emph{defends} the argument $B\in\Arg$, if $\Delta\attacks A$ for all arguments $A\in\Arg$ such that $A\attacks B$. 
Finally, $\Delta$ is called \emph{admissible} if it is conflict-free and defends every single one of its members.

\begin{definition}
A \textbf{stable extension} of $\AF$ is a conflict-free set $\Delta$ of arguments in $\Arg$ that attacks every other argument $B\in\Arg\setminus\Delta$. 
\end{definition}

\begin{definition}
A \textbf{preferred extension} of $\AF$ is an admissible set of arguments in $\Arg$ that is maximal with respect to set inclusion.
\end{definition}

\begin{definition}
A \textbf{complete extension} of $\AF$  is an admissible set of arguments in $\Arg$ that includes all of the arguments it defends.
\end{definition}

The \emph{characteristic function} of $\AF$ is the function $f_\AF : 2^\Arg\to 2^\Arg$ with $f_\AF : \Delta \mapsto \{A\in\Arg \mid \Delta\ \text{defends}\ A\}$, mapping any given set of arguments in $\Arg$ to the set of arguments it defends.

\begin{definition}
The \textbf{grounded extension} of $\AF$ is the least fixed point of its characteristic function $f_\AF$.
\end{definition}

We can compute the grounded extension~$\Delta$ by initialising $\Delta$ with the empty set~$\emptyset$ and then repeatedly executing the program $\Delta := f_\AF(\Delta)$, until no more changes occur.
Unlike for the other three semantics, there always is exactly one grounded extension. However, that extension may be empty. It is nonempty if and only if there is at least one unattacked argument.

How do these semantics relate to each other?
Every stable extension is also a preferred extension.
The set of stable extensions may be empty, while there always is at least one preferred extension.
Every preferred extension is also a complete extension. 
Finally, the grounded extension is always a complete extension as well.

%For any two of the four semantics presented, there exist argumentation frameworks for which the relevant extensions differ. Thus, an 
An interesting question is under what circumstances two or more %of these 
semantics coincide. Probably the clearest example %where they do %coincide 
is the case of %argumentation frameworks with 
an \emph{acyclic} attack-relation: if $\attacks$ does not include any cycles, then the grounded extension is also the only stable extension, the only preferred extension, and the only complete extension. 
%Indeed, for an acyclic~$\attacks$ it is entirely uncontroversial which arguments to accept. 
A weaker condition %than acyclicity is what is known as 
is \emph{coherence:} $\AF$ is called \emph{coherent} if every preferred extension of $\AF$ is stable, i.e., if the two semantics coincide.

%%%%%%%%%%%%%%%%%%%%%%%%%%%%%%%%%%%%%%%%%%%%%%%%%%%%%%%%%%%%%%%%%%%%%%%%%%%%%%%%
\section{The Model}\label{sec:model}
%%%%%%%%%%%%%%%%%%%%%%%%%%%%%%%%%%%%%%%%%%%%%%%%%%%%%%%%%%%%%%%%%%%%%%%%%%%%%%%%

Fix a finite set of arguments~$\Arg$. Let $N=\{1,\ldots,n\}$ be a finite set of $n$ \emph{agents}. Suppose each agent $i\in N$ supplies us with an argumentation framework $\AF_i=\langle\Arg,\attacks_i\rangle$, reflecting her individual views on the status of possible attacks between arguments. Thus, we are given a \emph{profile} of attack-relations $\prof{\attacks}=(\attacks_1,\ldots,\attacks_n)$.\footnote{Note that we assume that all agents report an attack-relation over \emph{the same} set of arguments~$\Arg$. As argued by Coste-Marquis et al.~\cite{CosteMarquisEtAlAIJ2007}, generalisations, where different agents may be aware of different subsets of $\Arg$, are possible and interesting, but---in line with most existing work in the area---we shall not explore them here.} What would be a good method of aggregating these individual argumentation frameworks to arrive at a single argumentation framework that appropriately reflects the views of the group as a whole? This is the central question we address in this paper. An \emph{aggregation rule} is a function $F : (\Arg\times\Arg)^n\to\Arg\times\Arg$ mapping any given profile of attack-relations into a single attack-relation.
% thereby determining a compromise argumentation framework.

\begin{example}
The first aggregation rule that comes to mind is the majority rule: include attack $A\attacks B$ in the outcome if and only if a (weak) majority of the individual agents do. If we apply this rule to the profile shown in Figure~\ref{fig:firstexample}, we obtain an argumentation framework with $A\attacks B$, $B\attacks C$, and $C\attacks A$.\myendofexample
\end{example}

\begin{figure}[t]
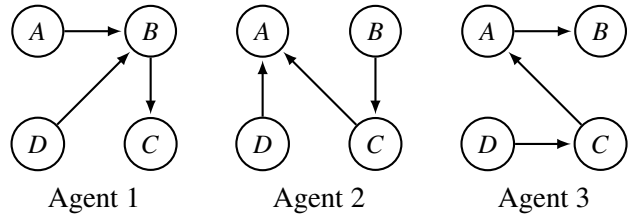

\[
\begin{tabular}{c@{\qquad}c@{\qquad}c}
\begin{AFfour}[1] 
\drawattack{A}{B}\drawattack{B}{C}\drawattack{D}{B}
\end{AFfour} &
\begin{AFfour}[1]
\drawattack{B}{C}\drawattack{C}{A}\drawattack{D}{A}
\end{AFfour} &
\begin{AFfour}[1]
\drawattack{C}{A}\drawattack{A}{B}\drawattack{D}{C}
\end{AFfour} \\
Agent 1 & Agent 2 & Agent 3
\end{tabular}
\]
\vspace*{-5pt}
\caption{Example for a profile with $\Arg=\{A,B,C,D\}$.\label{fig:firstexample}}
\end{figure}

In the remainder of this section, we first define a number of specific aggregation rules and review their properties. 
We focus on simple rules that are adaptations of well-known rules used in the social choice literature, particularly in judgment aggregation~\cite{GrossiPigozzi2014} and graph aggregation~\cite{EndrissGrandiAIJ2017}. We then adapt several standard \emph{properties of aggregation rules}, known as \emph{axioms} in that same literature, to our setting. Finally, we review several semantic \emph{properties of argumentation frameworks}
%, particularly properties relating to their semantics, 
and formulate the question of whether a given rule will \emph{preserve} such a property. 
% when all individual agents report argumentation frameworks that satisfy that property.

\subsection{Aggregation Rules}\label{sec:rules}

Recall that an aggregation rule is a function $F$, mapping any given profile $\prof{\attacks}=(\attacks_1,\ldots,\attacks_n) \in (\Arg\times\Arg)^n$ of attack-relations %on $\Arg$ 
to a single attack-relation $F({\prof{\attacks}}) \subseteq \Arg\times\Arg$. We sometimes write $(A\attacks B) \in F(\prof{\attacks})$ for $(A,B)\in F(\prof{\attacks})$. We use $N^{\prof{\attacks}}_{\att} := \{i\in N \mid \att \in (\attacks_i)\}$ to denote the set of \emph{supporters} of the attack $\att$ in profile~$\prof{\attacks}$.

\begin{definition}
Let $q\in\{1,\ldots,n\}$. The \textbf{quota rule}~$F_q$ with quota~$q$ accepts all those attacks that are supported by at least~$q$ agents:
\begin{eqnarray*}
F_q(\prof{\attacks}) & = & \left\{ \att\in\Arg\times\Arg \mid \#N^\prof{\attacks}_{\att} \geq q \right\}
\end{eqnarray*}
\end{definition}

The \emph{weak majority rule} is the quota rule $F_q$ with $q=\lfloor\frac{n}{2}\rfloor$ and the \emph{strict majority rule} is the quota rule $F_q$ with $q=\lceil\frac{n}{2}\rceil$. 
Two further quota rules are also of special interest. 
The \emph{unanimity rule} only accepts attacks that are supported by everyone, i.e., this is $F_q$ with $q=n$. The \emph{nomination rule} is the quota rule $F_q$ with $q=1$. Despite being a somewhat extreme choice, the nomination rule has some intuitive appeal in the context of argumentation, as it reflects the idea that we should take seriously any conflict between arguments raised by at least one member of the group.

\begin{definition}
Let $C\in 2^N\setminus\{\emptyset\}$ be a nonempty coalition of agents. 
The \textbf{oligarchic rule}~$F_C$ accepts all those attacks that are accepted by all members of~$C$:
\begin{eqnarray*}
F_C(\prof{\attacks}) & = & \left\{ \att\in\Arg\times\Arg \mid C\subseteq N^\prof{\attacks}_{\att} \right\}
\end{eqnarray*}
\end{definition}

Thus, any member of the oligarchy~$C$ can \emph{veto} an attack from being accepted.
Observe that the unanimity rule can also be characterised as the oligarchic rule~$F_C$ with $C=N$.
A subclass of the oligarchic rules are the \emph{dictatorships}. The dictatorship of dictator~$i\in N$ is the oligarchic rule~$F_C$ with $C=\{i\}$. Thus, under a dictatorship, to compute the outcome, we simply copy the attack-relation of the dictator. Intuitively speaking, oligarchic rules, and dictatorships in particular, are unattractive rules, as they unfairly exclude everyone not in $C$ from the decision process. 

Some rules combine features of the quota rules and the oligarchic rules. For example, we may choose to accept an attack only if it is accepted by $(i)$~a weak majority of all agents and $(ii)$~a small number of distinguished agents to which we want to give the right to veto 
%the acceptance of 
attacks. Such rules (sometimes called \emph{qualified majority rules}) are certainly more attractive than the oligarchic rules, but they are still unfair in the sense of granting some agents more influence than others.

\begin{definition}
Agent~$i\in N$ has \textbf{veto powers} under aggregation rule~$F$, if $F(\prof{\attacks}) \subseteq (\attacks_i)$ for every profile~$\prof{\attacks}$.
\end{definition}

Thus, under an oligarchic rule~$F_C$ the agents in~$C$, and only those, have veto powers. With the exception of the unanimity rule, %under which all agents have veto powers, 
a quota rule does not grant veto powers to any agent. 

\subsection{Axioms}\label{sec:axioms}

Next, we introduce some basic \emph{axioms}, encoding intuitively desirable properties of an aggregation rule~$F$:
%A rule that gives veto powers to some but not all agents violates the axiom of \emph{anonymity}. % defined next. It encapsulates a fundamental fairness requirement.

\begin{definition}
%An aggregation rule~
$F$ is %said to be 
\textbf{anonymous}, if $F(\prof{\attacks}) = F(\attacks_{\pi(1)},\ldots,\attacks_{\pi(n)})$ holds for all profiles $\prof{\attacks}$ and all permutations~$\pi:N\to N$.
\end{definition}

%In this definition and the ones that follow, we leave the quantification over all profiles $\prof{\attacks}$, $\prof{\attacks'}$ and all attacks $\att$, $\att'$ implicit.

\begin{definition}
%An aggregation rule~
$F$ is %said to be 
\textbf{neutral}, if $N^{\prof{\attacks}}_{\att} = N^{\prof{\attacks}}_{\att'}$ implies $\att \in F(\prof{\attacks}) \Leftrightarrow \att' \in F(\prof{\attacks})$ for all profiles $\prof{\attacks}$ and all attacks $\att$, $\att'$.
\end{definition}

%That is, in a given profile, if the set of supporters of $\att$ and $\att'$ are the same, then both or neither of them get accepted in the outcome.

\begin{definition}
%An aggregation rule~
$F$ is %said to be 
\textbf{independent}, if $N^{\prof{\attacks}}_{\att} = N^{\prof{\attacks'}}_{\att}$ implies $\att \in F(\prof{\attacks}) \Leftrightarrow \att \in F(\prof{\attacks'})$ for all profiles $\prof{\attacks}$, $\prof{\attacks'}$ and all attacks $\att$.
\end{definition}

%That is, given two profiles $F(\prof{\attacks})$ and $F(\prof{\attacks'})$, if the supporters of $\att$ under $F(\prof{\attacks})$ and $F(\prof{\attacks'})$ are the same, then the acceptance or rejection of $\att$ in the corresponding outcome should be the same under these two profiles.

\begin{definition}
%An aggregation rule~
$F$ is %said to be 
\textbf{monotonic}, if $N^{\prof{\attacks}}_{\att} \subseteq N^{\prof{\attacks'}}_{\att}$ 
(together with $N^{\prof{\attacks}}_{\att'} = N^{\prof{\attacks'}}_{\att'}$ for all attacks $\att' \not= \att$)  
implies $\att \in F(\prof{\attacks}) \Rightarrow \att \in F(\prof{\attacks'})$ 
for all profiles $\prof{\attacks}$, $\prof{\attacks'}$ and all attacks $\att$.
\end{definition}

%Monotonicity expresses that if an attack $\att$ gets accepted, then additional support for $\att$ should never cause $\att$ to be rejected.

\begin{definition}
%An aggregation rule~
$F$ is %said to be 
\textbf{unanimous}, if $F(\prof{\attacks}) \supseteq (\attacks_1) \cap \cdots \cap (\attacks_n)$
holds for all profiles ${\prof{\attacks}}=(\attacks_1,\ldots,\attacks_n)$.
\end{definition}

%That is, if an attack $\att$ is supported by all agents, then $\att$ should get accepted in the outcome.

\begin{definition}
%An aggregation rule~
$F$ is %said to be 
\textbf{grounded}, if $F(\prof{\attacks}) \subseteq (\attacks_1) \cup \cdots \cup (\attacks_n)$
holds for all profiles ${\prof{\attacks}}=(\attacks_1,\ldots,\attacks_n)$.
\end{definition}

Anonymity is a symmetry (and thus fairness) requirement regarding agents, and neutrality is a symmetry requirement regarding attacks. Independence expresses that whether an attack is accepted should only depend on its supporters. Monotonicity says that additional support for an accepted attack should never cause it to be rejected. Unanimity postulates that an attack supported by everyone must be accepted, while groundedness means that only attacks with at least one supporter can be collectively accepted.\footnote{Note that, in line with the existing literature in argumentation theory on the one hand and social choice theory on the other, we use the term ``grounded'' in two unrelated ways in this paper (grounded extensions \textit{vs.}\ grounded aggregation rules).}

We observe that all quota rules and all oligarchic rules are easily seen to be unanimous, grounded, neutral, independent, and monotonic. The quota rules furthermore are also anonymous. In fact, it is not difficult to adapt a well-known result from judgment aggregation to our setting~\cite{DietrichListJTP2007}, so as to see that the quota rules are \emph{the only} aggregation rules that satisfy all of these six axioms (see also~\cite{EndrissGrandiAIJ2017}).

\subsection{Preservation of Semantic Properties}

Typically, agents will disagree about which attacks 
%between arguments 
are justified.
% in $\Arg$ (if not, aggregation becomes trivial). But even if they disagree on the details, 
But even then, there may be high-level agreement on certain features. For example, they may all agree that, under a particular semantics, argument~$A$ is acceptable. 
%Or they may all agree that there is a unique extension, i.e., a unique set of arguments to be accepted, even if they may disagree on which arguments exactly belong to that set. 
Whenever we observe such agreement on semantic features in a profile, we would like those features to be preserved under aggregation. 
Thus, for our example, under the same semantics, we would like $A$ to be acceptable also in the argumentation framework computed by our aggregation rule.
%agents accept~$A$, even if for different reasons, 
%
In other words, we are interested in the preservation of properties of argumentation frameworks 
%(i.e., of the attack-relations that define them) 
under aggregation.
%An example for a property is antisymmetry (i.e., the absence of mutual attacks between arguments). Another example is the existence of an argument that is not attacked by any other argument. 
Formally, an \emph{AF-property}~$P \subseteq 2^{\Arg \times \Arg}$ is simply the set of all attack-relations on $\Arg$ that satisfy~$P$. But in the interest of readability, we write $P(\attacks)$ rather than $(\attacks)\in P$.

\begin{definition}
Let $F$ be an aggregation rule and let $P$ be an AF-property.
We say that $F$ \textbf{preserves} $P$, if for every profile~$\prof{\attacks}$ we have that $P(\attacks_i)$ being the case for all agents $i\in N$ implies $P(F(\prof{\attacks}))$.
\end{definition}

This notion of preservation is known under the name of \emph{collective rationality} in other parts of social choice theory~\cite{Arrow1963,EndrissGrandiAIJ2017,ListPettitEP2002}.

We now review the specific AF-properties for which we study preservation in this paper.
Two of them %concrete AF-properties of interest in this context 
we have already introduced in Section~\ref{sec:argumentation}, namely \emph{acyclicity} and \emph{coherence}. They are attractive properties, because---if satisfied by an argumentation framework---they reduce our dependence on a specific choice of semantics, thereby making decisions less controversial.

%The grounded semantics is attractive for two reasons. First, it is sceptical in the sense of only accepting those arguments we can have the most confidence in. Second, unlike the other extensions we have defined, the grounded extension is always unique. On the downside, that unique 
Recall that the grounded extension may be empty, i.e., this semantics may not suggest any arguments to be accepted. Thus, argumentation frameworks that satisfy the AF-property of \emph{nonemptiness of the grounded extension} are of particular interest.

Let $A\in\Arg$ be one of the arguments under consideration. Then, for any given argumentation framework, $A$ may or may not belong to the grounded extension. Thus, every $A\in\Arg$ defines an AF-property, namely the property of membership of~$A$ in the grounded extension, i.e., of acceptance of $A$ under the grounded semantics. 
%This in itself would be too narrow a property to be of much interest for our purposes. However, what is of interest is whether membership is preserved for \emph{all} arguments. 
We say that $F$ \emph{preserves argument acceptability under the grounded semantics}, if it is the case that, for all arguments $A\in\Arg$, whenever $A$ belongs to the grounded extension of $\langle\Arg,\attacks_i\rangle$ for all %agents 
$i\in N$, then $A$ also belongs to the grounded extension of $\langle\Arg,F(\prof{\attacks})\rangle$.
For the stable, preferred, and complete semantics, we require a more refined definition, given that extensions under these semantics need not be unique. 
We say that $F$ \emph{preserves argument acceptability under the stable semantics}, if it is the case that, for all arguments $A\in\Arg$, whenever $A$ belongs to \emph{some} stable extension of $\langle\Arg,\attacks_i\rangle$ for all agents $i\in N$, then $A$ also belongs to \emph{some} stable extension of $\langle\Arg,F(\prof{\attacks})\rangle$.
%We say that $F$ \emph{preserves possible argument acceptability under the stable semantics}, if it is the case that, for all arguments $A\in\Arg$, whenever $A$ belongs to \emph{some} stable extension of $\langle\Arg,\attacks_i\rangle$ for all agents $i\in N$, then $A$ also belongs to \emph{some} stable extension of $\langle\Arg,F(\prof{\attacks})\rangle$. Analogously, $F$  \emph{preserves necessary argument acceptability under the stable semantics}, if it is the case that, for all arguments $A\in\Arg$, whenever $A$ belongs to \emph{all} stable extensions of $\langle\Arg,\attacks_i\rangle$ for all agents $i\in N$, then $A$ also belongs to \emph{all} stable extensions of $\langle\Arg,F(\prof{\attacks})\rangle$.\footnote{Recall that a stable extension need not exist, so sometimes this will hold vacuously.} 
The corresponding concepts for the preferred and the complete semantics are defined accordingly.\footnote{A further distinction between acceptability in \emph{some} extension and acceptability in \emph{all} extensions would be possible. We leave the investigation of this issue to future work.}

Rather than just preserving the acceptability status of a single argument, we may also be interested in preserving entire extensions. For example, we say that $F$ \emph{preserves extensions under the stable semantics}, if it is the case that, for all sets $\Delta\subseteq\Arg$, whenever $\Delta$ is a stable extension of $\langle\Arg,\attacks_i\rangle$ for all agents $i\in N$, then $\Delta$ is also a stable extension of $\langle\Arg,F(\prof{\attacks})\rangle$. Again, the corresponding concept can also be defined for the other three semantics. Similarly, we say that $F$ \emph{preserves conflict-freeness}, if it is the case that, for all sets $\Delta\subseteq\Arg$, whenever $\Delta$ is conflict-free in $\langle\Arg,\attacks_i\rangle$ for all agents $i\in N$, then $\Delta$ is also conflict-free $\langle\Arg,F(\prof{\attacks})\rangle$. Finally, \emph{preservation of admissibility} is defined accordingly.

To summarise, we have identified the following AF-properties that, in case all agents agree on one of them being satisfied, we would like to see preserved under aggregation:
\begin{itemize}[itemsep=1pt]
\item \emph{acyclicity} and \emph{coherence} (reducing semantic ambiguity),
\item \emph{nonemptiness of the grounded extension} (enabling a sceptical approach to argument evaluation),
\item \emph{argument acceptability} under different semantics (allowing for agreement on arguments even in the face of disagreement on the attacks between them), and
\item \emph{being an extension} under one of the four semantics or being either a \emph{conflict-free} or an \emph{admissible set} (also allowing for semantic agreement despite disagreement on attacks).
\end{itemize}

\begin{example}
Consider again the profile of Figure~\ref{fig:firstexample} and recall that the (weak or strict) majority rule will return the argumentation framework with $A\attacks B$, $B\attacks C$, and $C\attacks A$. Thus, the majority rule does not preserve acyclicity.\footnote{This observation is closely related to the famous \emph{Condorcet Paradox} in the theory of preference aggregation \cite{McLeanUrken1995}.}
What about some of the other AF-properties?
The grounded extension of $\AF_1$ is $\{A,C,D\}$,
that of $\AF_2$ is $\{B,D\}$, 
that of $\AF_3$ is $\{A,D\}$, and
that of the majority outcome is $\{D\}$.
Thus, preservation of both nonemptiness of the grounded extension and argument acceptability under the grounded semantics are not violated by this particular example (which, of course, is not to say that they might not be violated for other profiles).\myendofexample   
\end{example}

%%%%%%%%%%%%%%%%%%%%%%%%%%%%%%%%%%%%%%%%%%%%%%%%%%%%%%%%%%%%%%%%%%%%%%%%%%%%%%%%
\section{Preservation Results}\label{sec:results}
%%%%%%%%%%%%%%%%%%%%%%%%%%%%%%%%%%%%%%%%%%%%%%%%%%%%%%%%%%%%%%%%%%%%%%%%%%%%%%%%

In this section, we present our results on the preservation of semantic properties under aggregation. We begin with argument acceptability, and then turn to the various properties of sets of arguments, 
%to nonemptiness of the grounded extension, 
and eventually to acyclicity and coherence. 
%We obtain both positive an neagtive results: some propertie scan be presevred by intuitively appealing aggregation rules, while others require us to use rules that give veto powers or even dictatorial powers to some of the agents.  
%
%We first recall the general framework and two properties introduced by \cite{EndrissGrandiAIJ2017}. We then show our results of preservation of argument acceptability. Next we turn to study the preservation of conflict-free sets and admissible sets, two positive results will be presented. Finally we show the results of the preservation acyclicity and coherence.
%

%Full proofs of all of our results can be found in the appendix at the end of this paper.

\subsection{Acceptability of an Argument}

Our first result is going to demonstrate that preserving acceptability of an argument when using a ``simple'' aggregation rule is impossible, unless we are willing to use a dictatorship. This is true under any of the four semantics. To prove this result---and some of those that follow---we are going to use a technique developed by Endriss and Grandi~\cite{EndrissGrandiAIJ2017} for the more general framework of graph aggregation, which in turn has been inspired by the seminal work on preference aggregation of Arrow~\cite{Arrow1963}. 
It amounts to showing that, under certain assumptions, the collection of coalitions of agents that are sufficiently powerful to force collective acceptance of an attack must form an ultrafilter.

Using our present terminology, Endriss and Grandi~\cite[Theorem~18]{EndrissGrandiAIJ2017} show that, for $|\Arg|\geq 3$, any unanimous, grounded, neutral, and independent aggregation rule~$F$ that preserves some AF-property~$P$ must be a dictatorship whenever $P$ belongs to what they call the family of \emph{implicative} and \emph{disjunctive} properties. 
$P$ is \emph{implicative} if there exist a set $\Att\subseteq\Arg\times\Arg$ of attacks and three attacks $\att_1,\att_2,\att_3\in\Arg\times\Arg\setminus\Att$ such that, for any $S\subseteq\{\att_1,\att_2,\att_3\}$, we have $P(\Att\cup S)$ if and only if $S\not=\{\att_1,\att_2\}$. Thus, $P$ should require that, in the context of $\Att$, accepting $\att_1$ and $\att_2$ implies accepting $\att_3$ (and all seven patterns of acceptance consistent with that requirement are possible).
$P$ is \emph{disjunctive} if there exist $\Att\subseteq\Arg\times\Arg$ and $\att_1,\att_2\in\Arg\times\Arg\setminus\Att$ such that, for any $S\subseteq\{\att_1,\att_2\}$, we have $P(\Att\cup S)$ if and only if $S\not=\emptyset$. Thus, $P$ should require that, given $\Att$, we must accept at least one of $\att_1$ and $\att_2$ (and all three patterns of acceptance consistent with that requirement are possible).\footnote{Our definitions of implicativeness and disjunctiveness are special cases of the more general definitions given by Endriss and Grandi~\cite{EndrissGrandiAIJ2017}. They simplify exposition and are sufficient for our purposes here.}

In the appendix, we show that, under any of the four semantics and for $|\Arg|\geq 4$, the property of argument acceptability is both implicative and disjunctive. Thus, we obtain the following impossibility result, showing that there exists no aggregation rule that satisfies all of our requirements:

\begin{theorem}\label{thm:acceptability-dictator}
Let $P$ be the property of argument acceptability under either the grounded, the stable, the preferred, or the complete semantics. For $|\Arg|\geq 4$, any unanimous, grounded, neutral, and independent aggregation rule~$F$ that preserves $P$ must be a dictatorship.
\end{theorem}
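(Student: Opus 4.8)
The plan is to reduce the whole statement to the abstract ``ultrafilter'' argument that Endriss and Grandi have already packaged for us: by their Theorem~18 (quoted just above), for $|\Arg|\geq 3$ every unanimous, grounded, neutral, and independent rule that preserves an AF-property which is both \emph{implicative} and \emph{disjunctive} must be a dictatorship. So it is enough to prove that, for $|\Arg|\geq 4$ and for each of the four semantics, the property ``argument $A$ is acceptable'' is implicative and disjunctive. I would fix one argument $A$ and three further arguments $B,C,D$ (available because $|\Arg|\geq 4$) and construct both witnessing gadgets using only attacks among $\{A,B,C,D\}$; since any remaining arguments stay isolated---hence unattacked---in every framework that arises, they do not interact with $A$, so the gadgets transfer unchanged to larger $\Arg$.

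For \emph{disjunctiveness} I would take $\Att=\{B\attacks A\}$ together with $\att_1$ being the attack $C\attacks B$ and $\att_2$ the attack $D\attacks B$. The idea is that $B$ is the only attacker of $A$, so $A$ can be accepted only once $B$ is counter-attacked by a ``self-defending'' (i.e.\ unattacked) argument: with no attack on $B$, the argument $A$ lies in no admissible set, whereas adding either $C\attacks B$ or $D\attacks B$ makes $B$ defeated and $A$ part of the grounded extension, which is easily seen to also be a stable extension here. Checking the three configurations $S\subseteq\{\att_1,\att_2\}$ then confirms $P(\Att\cup S)$ iff $S\neq\emptyset$, uniformly across the four semantics.

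For \emph{implicativeness} I would take $\Att=\{C\attacks B\}$ together with $\att_1,\att_2,\att_3$ being the attacks $B\attacks A$, $D\attacks C$, and $D\attacks B$, respectively. The intended behaviour along the eight configurations is: if $\att_1\notin S$, then $A$ is unattacked, hence accepted; if $\att_1\in S$ but $\att_2\notin S$, then $C$ is still unattacked and $C\attacks B$ defends $A$; adding $\att_2$ attacks $C$, so $C$ can no longer be used to defend $A$, the attacker $B$ is reinstated, and $A$ is defeated; finally, adding $\att_3$ lets the (still unattacked) argument $D$ counter-attack $B$ directly, re-defending $A$. One then verifies over the eight sets $S\subseteq\{\att_1,\att_2,\att_3\}$ that $P(\Att\cup S)$ holds exactly when $S\neq\{\att_1,\att_2\}$. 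As before, the bookkeeping need only be carried out once, for the grounded semantics---iterating the characteristic function on each of the eight attack graphs---and then lifted: in each of the seven ``acceptable'' frameworks $A$ is in the grounded extension, which is contained in every complete, preferred, and stable extension and in at least one stable extension (one is easily exhibited), so $A$ is accepted under all four semantics; in the single ``unacceptable'' framework $A$ lies in no admissible set, hence in no complete, preferred, stable, or grounded extension.

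The delicate point---and the only genuinely non-routine part---is finding the implicativeness gadget under the tight bound $|\Arg|=4$. Adding the edge $\att_3$ to the forbidden framework $\Att\cup\{\att_1,\att_2\}$ can never remove an attack or create a fresh unattacked argument, so that framework has to be designed so that it still contains an unattacked argument which $\att_3$ can harness to counter-attack the (unique, undefeated) attacker of $A$, while at the same time $\att_1$ and $\att_2$ must each be innocuous in isolation. Threading a single four-vertex attack graph through all of these requirements at once, and robustly for the stable, preferred, complete, and grounded semantics simultaneously, is where the real work lies; the rest is mechanical case-checking that I would relegate, together with the observation that the gadgets involve no attacks touching arguments outside $\{A,B,C,D\}$, to an appendix, exactly as the authors do.
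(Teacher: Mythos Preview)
Your proposal is correct and follows the same high-level route as the paper: reduce to the Endriss--Grandi meta-theorem by exhibiting, for one fixed argument, witnesses for implicativeness and disjunctiveness that work uniformly across all four semantics. Your disjunctiveness gadget is a relabelling of the paper's; your implicativeness gadget differs (the paper places the distinguished argument on an odd $3$-cycle $D\attacks B\attacks C\attacks D$ in the forbidden configuration, whereas you use the chain $D\attacks C\attacks B\attacks A$ so that in the forbidden case $D$ reinstates $B$ rather than $A$), but both constructions are valid and the verification is equally routine.
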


This is bad news. 
For scenarios with $|\Arg|\leq 3$, we can do better and show that also the nomination rule preserves argument acceptability. In the interest of space, we omit the details and also do not discuss such boundary conditions for our remaining results.

\subsection{Conflict-Freeness and Admissibility}

Argument acceptability is a property that relates to a \emph{single} argument. Next, we turn to properties that relate to \emph{sets} of arguments. 
%Observe that we cannot make any general statements about whether this is more or less demanding a requirement: On the one hand, getting $\langle\Arg,F(\prof{\attacks})\rangle$ to satisfy some property for an entire set of arguments is harder than doing so for just one argument; one the other hand, the same is true for each $\langle\Arg,\attacks_i\rangle$, so we have to worry about fewer profiles where preservation is not already satisfied vacuously.
Our most basic property of sets of arguments is preserved under essentially all reasonable aggregation rules:

\begin{theorem}\label{thm:grounded-conflictfree}
Every aggregation rule~$F$ that is grounded preserves conflict-freeness.
\end{theorem}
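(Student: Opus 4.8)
The plan is to argue at the level of a single conflict, by contraposition. Fix an arbitrary profile $\prof{\attacks}=(\attacks_1,\ldots,\attacks_n)$ and a set $\Delta\subseteq\Arg$ that is conflict-free in $\langle\Arg,\attacks_i\rangle$ for every agent $i\in N$. I want to show that $\Delta$ is conflict-free in $\langle\Arg,F(\prof{\attacks})\rangle$ as well.

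Suppose, towards a contradiction, that it is not. Then, by the definition of conflict-freeness, there are arguments $A,B\in\Delta$ with $(A\attacks B)\in F(\prof{\attacks})$. Since $F$ is grounded, $F(\prof{\attacks})\subseteq(\attacks_1)\cup\cdots\cup(\attacks_n)$, so the attack $(A,B)$ must already be present in the attack-relation of some agent, i.e.\ there is an $i\in N$ with $A\attacks_i B$. But $A$ and $B$ both lie in $\Delta$, so this contradicts the assumption that $\Delta$ is conflict-free in $\langle\Arg,\attacks_i\rangle$. Since $\prof{\attacks}$ and $\Delta$ were arbitrary, $F$ preserves conflict-freeness.

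The crux is merely the observation that conflict-freeness is a property of a purely \emph{negative}, universally quantified kind: it asserts the \emph{absence} of certain attacks inside $\Delta$, so it can only be destroyed by the aggregation rule introducing a ``fresh'' attack that no individual put forward, and groundedness rules out exactly that. There is therefore no real obstacle here; the only thing to be careful about is matching the formal definitions of ``grounded'' and ``conflict-free'' and noting that the argument works uniformly for every candidate set $\Delta$, so no genericity or cardinality assumption on $\Arg$ is needed.
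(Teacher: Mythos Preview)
Your proof is correct and essentially identical to the paper's own argument: both assume a conflict $A\attacks B$ in the aggregate, use groundedness to locate an agent~$i$ with $A\attacks_i B$, and derive a contradiction with conflict-freeness of $\Delta$ in $\langle\Arg,\attacks_i\rangle$. The additional commentary about conflict-freeness being a ``negative'' property is accurate and adds useful intuition beyond what the paper states.
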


\begin{proof}
Let $F$ be an aggregation rule that is \emph{grounded}. 
%We need to show that $F$ preserves \emph{conflict-freeness}.
%  
Consider any set $\Delta\subseteq\Arg$ and any profile $\prof{\attacks}=(\attacks_1,\ldots,\attacks_n)$ such that $\Delta$ is conflict-free in $\langle\Arg,\attacks_i\rangle$ for all $i\in N$. 
For the sake of contradiction, assume $\Delta$ is \emph{not} conflict-free in $\langle\Arg,F(\prof{\attacks})\rangle$, i.e., there exist two arguments $A,B\in\Delta$ such that $(A\attacks B)\in F(\prof{\attacks})$. Due to the groundedness of $F$, there then must be at least one agent $i\in N$ such that also $A\attacks_i B$, i.e., $\Delta$ is not conflict-free in $\langle\Arg,\attacks_i\rangle$ either, in contradiction to our original assumption.
\end{proof}

For admissibility, we obtain a significantly less broad but still positive result. It shows that there exists a reasonable rule that preserves the admissibility of arbitrary sets of arguments: 

\begin{theorem}\label{thm:admissibility-nomination}
For $|\Arg|\geq 4$, the only unanimous, grounded, anonymous, neutral, independent, and monotonic aggregation rule~$F$ that preserves admissibility is the nomination rule.
\end{theorem}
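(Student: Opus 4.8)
The plan is to prove the two directions separately, with the characterisation of quota rules recalled at the end of Section~\ref{sec:axioms} as the main lever.

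For the harder ``only if'' direction, suppose $F$ is unanimous, grounded, anonymous, neutral, independent, and monotonic. By that characterisation, $F$ must be a quota rule $F_q$ for some $q\in\{1,\ldots,n\}$, so it suffices to show that $F_q$ fails to preserve admissibility whenever $q\geq 2$. Since conflict-freeness is preserved under any grounded rule (Theorem~\ref{thm:grounded-conflictfree}), the only way admissibility can break is for the outcome to contain an attack on a member of a set $\Delta$ that is admissible for every agent, without $\Delta$ counterattacking. I would realise this with three of the (at least four) available arguments $A$, $B$, $C$ and $\Delta=\{A,B\}$. Write $q=a+b$ with $a,b\geq 1$; this is possible precisely because $q\geq 2$, and then $a,b\leq q-1<q$. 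Let agents $1,\ldots,a$ report the attack-relation $\{C\attacks A,\;A\attacks C\}$, let agents $a+1,\ldots,q$ report $\{C\attacks A,\;B\attacks C\}$, and let the remaining $n-q$ agents report the empty attack-relation. A routine check shows $\{A,B\}$ is admissible for every agent: it is conflict-free, it defends $A$ (whose sole attacker $C$ is counterattacked by $A\attacks C$ or by $B\attacks C$), and it defends $B$ vacuously. But in $F_q(\prof{\attacks})$ the attack $C\attacks A$ has exactly $q$ supporters and is accepted, while $A\attacks C$ has only $a<q$ supporters and $B\attacks C$ only $b<q$, so both are rejected; hence the outcome is the framework whose only attack is $C\attacks A$, in which $\{A,B\}$ no longer defends $A$ and so is not admissible. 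This forces $q=1$, i.e.\ $F$ is the nomination rule.

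For the ``if'' direction, the nomination rule $F_1$ is a quota rule and hence satisfies all six axioms (as already observed in Section~\ref{sec:axioms}), so only preservation of admissibility needs checking. Here $F_1(\prof{\attacks})=(\attacks_1)\cup\cdots\cup(\attacks_n)$. Assume $\Delta$ is admissible for every agent. Conflict-freeness of $\Delta$ in the union is immediate from Theorem~\ref{thm:grounded-conflictfree}, since $F_1$ is grounded. For defence, suppose $C\attacks B$ belongs to the union with $B\in\Delta$; then $C\attacks_i B$ for some agent $i$, and since $\Delta$ is admissible in $\langle\Arg,\attacks_i\rangle$ there is some $D\in\Delta$ with $D\attacks_i C$, so that $D\attacks C$ also belongs to the union. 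Thus $\Delta$ counterattacks every attacker of each of its members in $\langle\Arg,F_1(\prof{\attacks})\rangle$, i.e.\ $\Delta$ is admissible there.

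The main obstacle is designing the counterexample in the ``only if'' direction: it must simultaneously keep $\Delta$ admissible for each agent while arranging that, once the threshold $q\geq 2$ is applied, every counterattack $\Delta$ could use drops below the quota yet the offending attack $C\attacks A$ does not. Splitting the $q$ supporters of $C\attacks A$ into two groups that defend through two different members of $\Delta$ is what makes both counterattacks individually too weak, and giving the remaining $n-q$ agents the empty relation keeps them from rescuing either one. Once this configuration is found, the remaining verifications are bookkeeping.
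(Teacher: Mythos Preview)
Your proof is correct and follows the same two-part structure as the paper: invoke the quota-rule characterisation to reduce the ``only if'' direction to exhibiting a bad profile for each $q\geq 2$, and verify directly that the nomination rule preserves admissibility. The only real difference is the counterexample itself. The paper uses four arguments with $\Delta=\{A,B,C\}$ and an outside attacker $D$, letting $q-1$ agents defend $A$ via $B\attacks D$ and one agent via $C\attacks D$; you instead use three arguments with $\Delta=\{A,B\}$, splitting $q=a+b$ and letting $A$ defend \emph{itself} in one block and $B$ defend it in the other. Your construction is slightly more economical (it would in fact go through already for $|\Arg|\geq 3$), at the cost of relying on self-defence, which the paper's version avoids. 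Both realise the same idea: make every individual counterattack fall strictly below the quota while the incoming attack reaches it exactly.
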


\begin{proof}
We first show that the \emph{nomination rule} indeed preserves \emph{admissibility}.
So let $F$ be the nomination rule.
Consider any set $\Delta\subseteq\Arg$ and any profile $\prof{\attacks}=(\attacks_1,\ldots,\attacks_n)$ such that $\Delta$ is admissible in $\AF_i = \langle\Arg,\attacks_i\rangle$ for all $i\in N$.
For the sake of contradiction, assume $\Delta$ is \emph{not} admissible in $\langle\Arg,F(\prof{\attacks})\rangle$, i.e., there is an argument $A \in \Delta$ that, in $F(\prof{\attacks})$, is attacked by an argument $B \in \Arg \setminus \Delta$ and there does not exist a $C \in \Delta$ such that $(C\attacks B)\in F(\prof{\attacks})$.
As $(B \attacks A) \in F(\prof{\attacks})$ and as $F$ is grounded, we must have $B \attacks_i A$ for some $i \in N$. 
And as there does not exist a $C \in \Delta$ such that $(C \attacks A) \in F(\prof{\attacks})$, given the definition of the nomination rule, there cannot exist an argument $C \in \Delta$ such that $C \attacks_i A$ for that same agent~$i$. Hence, $\Delta$ is not admissible in $\AF_i$, in contradiction to our original assumption.

We still need to show that there can be no other aggregation rule than the nomination rule that preserves admissibility and that satisfies all of the axioms mentioned in the statement of Theorem~\ref{thm:admissibility-nomination}. By the characterisation result for quota rules due to Dietrich and List~\cite{DietrichListJTP2007} in the context of judgment aggregation, which has been adapted to graph aggregation by Endriss and Grandi~\cite{EndrissGrandiAIJ2017} and which we have briefly recalled near the end of Section~\ref{sec:rules}, this claim is equivalent to the claim that \emph{no quota rule} $F_q$ with a quota $q>1$ always preserves admissibility. So let us prove this.

\begin{figure}[t]
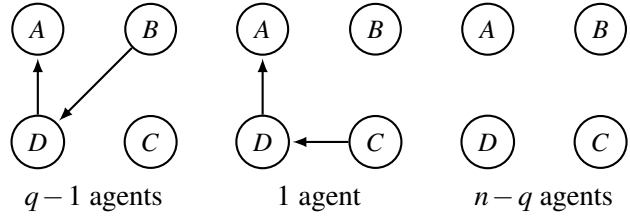

\[
\begin{tabular}{c@{\qquad}c@{\qquad}c}
\begin{AFfour}[1]  
\drawattack{D}{A}\drawattack{B}{D}
\end{AFfour} &
\begin{AFfour}[1] 
\drawattack{D}{A}\drawattack{C}{D}
\end{AFfour} &
\begin{AFfour}[1] 
\end{AFfour} \\
$q\,{-}\,1$ agents & $1$ agent & $n\,{-}\,q$ agents
\end{tabular}
\]
\vspace*{-5pt}
\caption{Profile used in the proof of Theorem~\ref{thm:admissibility-nomination}.\label{fig:admissibility-nomination-proof}}
\end{figure}

Consider the generic profile shown in Figure~\ref{fig:admissibility-nomination-proof} (and note that $q>1$ ensures $q-1>0$, i.e., there is at least one agent of the first kind). The set $\{A,B,C\}$ is admissible in all argumentation frameworks in such a profile. But when we aggregate using a quota rule $F_q$ with a quota $q>1$, we obtain an argumentation framework with a single attack $D\attacks A$, which means that $A$ cannot be part of any admissible set. Hence, no such rule can preserve admissibility.
\end{proof}

\subsection{The Property of Being an Extension}

We continue our examination of properties relating to sets of arguments and turn to the property of being an extension of a given argumentation framework. The following result can again be proved by reference to the result of Endriss and Grandi~\cite{EndrissGrandiAIJ2017}:

\begin{theorem}\label{thm:grounded-extension-dictator}
For $|\Arg|\geq 5$, any unanimous, grounded, neutral, and independent aggregation rule~$F$ that preserves grounded extensions must be a dictatorship.
\end{theorem}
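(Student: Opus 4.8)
The plan is to derive this from Theorem~18 of~\cite{EndrissGrandiAIJ2017}, in exactly the same style as Theorem~\ref{thm:acceptability-dictator}: it suffices to exhibit a single AF-property~$P$ that is both implicative and disjunctive and that is preserved by $F$ as soon as $F$ preserves grounded extensions. The right candidate is not ``the grounded extension'' in the abstract, but the property $P_{\Delta^\star}$ of having one fixed set $\Delta^\star\subseteq\Arg$ as one's grounded extension. So the first step is the trivial observation that ``$F$ preserves grounded extensions'' means precisely that $F$ preserves $P_\Delta$ for \emph{every}~$\Delta$, and hence in particular preserves the single property $P_{\Delta^\star}$; since the grounded extension is always unique, $P_{\Delta^\star}$ is a bona fide AF-property. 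Everything then reduces to choosing $\Delta^\star$ and verifying that $P_{\Delta^\star}$ is implicative and disjunctive for $|\Arg|\geq 5$.

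For disjunctiveness a gadget on four arguments suffices: let $\Att$ consist of the single attack $B\attacks A$ (all remaining arguments isolated) and let $\att_1,\att_2$ be the attacks $C\attacks B$ and $D\attacks B$. When $S=\emptyset$ the grounded extension contains~$B$ and excludes~$A$; as soon as $B$ acquires at least one attacker, $B$ leaves and $A$ is reinstated, so for every $S\neq\emptyset$ the grounded extension equals $\Arg\setminus\{B\}$. Taking $\Delta^\star:=\Arg\setminus\{B\}$ gives $P_{\Delta^\star}(\Att\cup S)\Leftrightarrow S\neq\emptyset$, as required. For implicativeness one needs a configuration on five arguments in which one designated argument lies in the grounded extension exactly when it is \emph{not} the case that $\att_1$ and $\att_2$ are both present while $\att_3$ is absent, with the membership of every other argument held fixed over all eight subsets $S\subseteq\{\att_1,\att_2,\att_3\}$; the role of $\att_3$ is to ``repair'' the joint effect of $\att_1$ and $\att_2$, and it is this extra argument that accounts for the bound $|\Arg|\geq 5$, one more than in Theorem~\ref{thm:acceptability-dictator}. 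Once $P_{\Delta^\star}$ is known to be implicative and disjunctive, Theorem~18 of~\cite{EndrissGrandiAIJ2017} applies directly and forces $F$ to be a dictatorship.

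The delicate step, which I would carry out in detail in an appendix, is the construction and verification of the implicative gadget. The difficulty is that the grounded extension is computed by an iterative reinstatement process, so adding a single attack can trigger a cascade flipping the status of many arguments at once; one must therefore engineer the five-argument configuration so that (i)~$\att_1$ alone and $\att_2$ alone each leave the grounded extension untouched, (ii)~$\att_1$ and $\att_2$ together change the status of exactly the designated argument, and (iii)~adding $\att_3$ on top of that undoes the change, all without disturbing any other argument. The standard device for keeping these cascades under control is to route the ``variable'' part of the construction through arguments that are permanently outside the grounded extension and whose only function is to be, or to fail to be, counter-attacked by it; with such a configuration in hand the remaining verification is a finite, if tedious, case check over the eight choices of~$S$.
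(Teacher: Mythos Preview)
Your approach coincides with the paper's: fix a single set $\Delta^\star$, show that $P_{\Delta^\star}$ is both implicative and disjunctive, and invoke Theorem~18 of~\cite{EndrissGrandiAIJ2017}. The paper chooses $\Delta^\star=\{A,C,E\}$ (two arguments outside when $|\Arg|=5$) and spells out both gadgets; your disjunctiveness gadget for $\Delta^\star=\Arg\setminus\{B\}$ is correct and in fact simpler than the paper's, which runs a four-cycle on $A,B,C,D$ and breaks it with attacks from~$E$.

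The only real gap is the deferred implicativeness gadget, and your informal description of it is stricter than necessary: you ask that \emph{exactly one} designated argument flip in the bad case, whereas implicativeness only requires that the grounded extension \emph{differ} from $\Delta^\star$ when $S=\{\att_1,\att_2\}$. With your $\Delta^\star=\Arg\setminus\{B\}$ there is only a single permanently-excluded argument to ``route through,'' and insisting on a one-argument flip makes the search harder than it needs to be. Dropping that constraint, a working five-argument gadget is $\Att=\{C\attacks B,\ D\attacks B\}$ with $\att_1=B\attacks C$, $\att_2=B\attacks D$, $\att_3=E\attacks B$: whenever $S\neq\{\att_1,\att_2\}$ at least one of $C,D,E$ is unattacked and still attacks~$B$, so $B$ is out and everything $B$ attacks is reinstated, giving grounded extension $\Arg\setminus\{B\}$; for $S=\{\att_1,\att_2\}$ the arguments $B,C,D$ sit in an isolated symmetric block and the grounded extension collapses to $\Arg\setminus\{B,C,D\}$ (two arguments flip, which is fine). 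The paper's alternative routes a four-cycle through its two excluded arguments $B$ and $D$ (taking $\Att=\{A\attacks B,\ C\attacks D\}$, $\att_1=B\attacks C$, $\att_2=D\attacks A$, $\att_3=E\attacks D$), which is equally valid; either construction completes your plan.
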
 

The required proof of the fact that all three properties are both implicative and disjunctive can be found in the appendix.
We conjecture that Theorem~\ref{thm:grounded-extension-dictator} can be extended to the preferred and the complete semantics. The added difficulty lies in the fact that these semantics admit multiple extensions.
Interestingly, for %the preservation of 
stable extensions we obtain a much more positive result:

\begin{proposition}\label{prop:nomination-stable}
The nomination rule preserves stable extensions.
\end{proposition}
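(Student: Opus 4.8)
The plan is to work directly from the definitions. Recall that the nomination rule is the quota rule $F_q$ with $q=1$, so $F(\prof{\attacks}) = (\attacks_1)\cup\cdots\cup(\attacks_n)$; in particular it is grounded. Fix a set $\Delta\subseteq\Arg$ and a profile $\prof{\attacks}=(\attacks_1,\ldots,\attacks_n)$ such that $\Delta$ is a stable extension of $\AF_i=\langle\Arg,\attacks_i\rangle$ for every agent $i\in N$. I need to establish the two defining conditions of a stable extension for $\Delta$ in $\langle\Arg,F(\prof{\attacks})\rangle$: that $\Delta$ is conflict-free there, and that $\Delta$ attacks every argument in $\Arg\setminus\Delta$ there.

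For conflict-freeness I would simply invoke Theorem~\ref{thm:grounded-conflictfree}: every stable extension is conflict-free, so $\Delta$ is conflict-free in each $\AF_i$, and since the nomination rule is grounded, conflict-freeness carries over to $F(\prof{\attacks})$. For the second condition, take an arbitrary argument $B\in\Arg\setminus\Delta$. Since $\Delta$ is a stable extension of $\AF_1$ (any agent will do), there is some $A\in\Delta$ with $A\attacks_1 B$. As $(\attacks_1)\subseteq F(\prof{\attacks})$, we get $(A\attacks B)\in F(\prof{\attacks})$, so $\Delta$ attacks $B$ in the aggregated framework as well. Since $B$ was arbitrary, $\Delta$ attacks every argument outside it, and hence $\Delta$ is a stable extension of $\langle\Arg,F(\prof{\attacks})\rangle$.

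I do not expect any real obstacle here. The point is that the two halves of the stable-extension condition pull in opposite directions — conflict-freeness wants \emph{few} attacks among members of $\Delta$, while covering $\Arg\setminus\Delta$ wants \emph{many} attacks from $\Delta$ outward — and the nomination rule happens to be exactly calibrated to respect both: groundedness (shared by every quota and oligarchic rule) protects the first, while taking the union with the other agents' relations only enlarges the attacks available for the second. The only subtlety worth flagging in the write-up is that the outcome relation is a \emph{superset} of each individual relation, so one must check that this enlargement never destroys conflict-freeness, which is precisely what Theorem~\ref{thm:grounded-conflictfree} guarantees. This also explains, informally, why the situation differs from Theorem~\ref{thm:grounded-extension-dictator}: for the grounded extension the defence requirement is not monotone in the attack-relation in this convenient way, so no dictatorship is forced in the stable case.
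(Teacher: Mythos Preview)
Your proof is correct and follows essentially the same approach as the paper: invoke Theorem~\ref{thm:grounded-conflictfree} for conflict-freeness, then use stability in a single agent's framework together with the fact that the nomination rule contains each individual relation to obtain the attacking condition. The only cosmetic difference is that the paper picks an arbitrary agent~$i$ rather than agent~$1$ and separately mentions the vacuous case $\Delta=\Arg$, neither of which affects the argument.
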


\begin{proof}
Let $F$ be the \emph{nomination rule}. 
%We need to show that $F$ preserves \emph{stable extensions}.
Consider any set $\Delta\subseteq\Arg$ and any profile $\prof{\attacks}=(\attacks_1,\ldots,\attacks_n)$ such that $\Delta$ is stable in $\langle\Arg,\attacks_i\rangle$ for all $i\in N$.
According to Theorem~\ref{thm:grounded-conflictfree}, given that $F$ is grounded, $F$ preserves conflict-freeness. Thus, $\Delta$ is conflict-free in $\langle \Arg, F(\prof{\attacks}) \rangle$. 

What remains to be shown is that $\Delta$ attacks every argument $B\in\Arg\setminus\Delta$.
In case $\Delta = \Arg$, the claim holds vacuously. 
Otherwise, consider an arbitrary argument $B\in\Arg\setminus\Delta$. We need to show that $B$ is attacked by some argument in $\Delta$ in $F(\prof{\attacks})$. 
Take the argumentation framework $\AF_i = \langle\Arg,\attacks_i\rangle$ for some $i \in N$. As $\Delta$ is stable in $\AF_i$ by assumption, there exists an argument $A\in\Delta$ such that $A\attacks_i B$. As $F$ is the nomination rule, we also get $(A \attacks B) \in F(\prof{\attacks})$ as claimed. 
\end{proof}

\subsection{Nonemptiness of the Grounded Extension}

We have seen that preserving grounded extensions is impossible for reasonable aggregation rules
(see Theorem~\ref{thm:grounded-extension-dictator}). 
What about the seemingly less demanding requirement of at least preserving nonemptiness of the grounded extension? The good news is that we can do better for this property. For instance, it is easy to check that the unanimity rule preserves nonemptiness of the grounded extension. Still, as we shall see next, we cannot do \emph{much} better: only rules that grant veto powers to some agents will work.

Recall that the grounded extension is nonempty if an only if at least one argument is not attacked by any other argument. Thus, this AF-property is about the \emph{absence} of attacks, while the technique we employed to prove Theorem~\ref{thm:grounded-extension-dictator} (and Theorem~\ref{thm:acceptability-dictator}) exploits the \emph{presence} of certain attacks. 
% (recall the definition of implicativeness and disjunctiveness).
We are now going to present our preservation result regarding the nonemptiness of the grounded extension as a corollary to a more general theorem about the preservation of AF-properties that require the absence of certain attacks. 
Let $k\in\mathbb{N}$. Let us call an AF-property $P$ \emph{$k$-exclusive} if there exist $k$ distinct attacks $\att_1,\ldots,\att_k\in\Arg\times\Arg$ such that
$(i)$~$\{\att_1,\ldots,\att_k\}\subseteq(\attacks)$ for no attack-relation $\attacks$ with $P(\attacks)$, and
$(ii)$~for every $S\subsetneq\{\att_1,\ldots,\att_k\}$ there exists an attack-relation $\attacks$ such that $S\subseteq(\attacks)$ and $P(\attacks)$.
Thus, you cannot accept all $k$ attacks, but you should be able to accept any proper subset of them.
We are able to prove the following powerful theorem (recall that $n$ is the number of agents in~$N$):

%Thus, no AF with property~$P$ includes the full set of $k$ attack relations concerned, but we can include any of its proper subsets. We may paraphrase this meta-property as requiring $[\neg e_1\vee\cdots\vee\neg e_k]$. Antitransitivity, requiring $(A \attacks B)\wedge (B \attacks C)\rightarrow \neg(C \attacks A)$ for all $A,B,C\in \Arg$, is an example for a property that is 3-exclusive. This follows from the fact that we can rewrite it as $\neg(A \attacks B)\vee \neg(B \attacks C)\vee \neg(C \attacks A)$.

\begin{theorem}\label{thm:k-exclusive-veto}
Let $k\geq n$ and let $P$ be an AF-property that is $k$-exclusive.
Then under any neutral and independent aggregation rule~$F$ that preserves~$P$ at least one agent must have veto powers.
\end{theorem}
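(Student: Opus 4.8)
The plan is to recast $F$ in terms of its family of winning coalitions and then use the hypothesis $k\ge n$ to build a profile that forces all $k$ distinguished attacks into the collective outcome. First I would note that, since $F$ is independent, whether an attack $\att$ is collectively accepted depends only on its set of supporters $N^{\prof{\attacks}}_{\att}$, and since $F$ is neutral, the same criterion is applied to every attack; hence there is a family $\mathcal{W}\subseteq 2^N$ with $\att\in F(\prof{\attacks})$ if and only if $N^{\prof{\attacks}}_{\att}\in\mathcal{W}$, for every attack $\att$ and every profile $\prof{\attacks}$. In this language, agent $i$ has veto powers exactly when $i$ belongs to every coalition in $\mathcal{W}$, so the claim is equivalent to $\bigcap\mathcal{W}\neq\emptyset$. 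I would prove this by contradiction, assuming $\bigcap\mathcal{W}=\emptyset$.

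Let $\att_1,\dots,\att_k$ be the distinguished attacks witnessing that $P$ is $k$-exclusive. From $\bigcap\mathcal{W}=\emptyset$ and finiteness of $N$, for each agent $i\in N$ I may pick a coalition $W_i\in\mathcal{W}$ with $i\notin W_i$; these are at most $n$ coalitions, and $\bigcap_{i\in N}W_i=\emptyset$. Because $k\ge n$, I can list them (repeating some if $k>n$) as $W'_1,\dots,W'_k\in\mathcal{W}$, still with $\bigcap_{l=1}^{k}W'_l=\emptyset$. Now I would construct a profile $\prof{\attacks}$ in which agent $i$ includes the attack $\att_l$ precisely when $i\in W'_l$, so that the supporter set of $\att_l$ is exactly $W'_l$. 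Since $\bigcap_l W'_l=\emptyset$, every agent $i$ omits at least one distinguished attack, so her set of included distinguished attacks is a \emph{proper} subset of $\{\att_1,\dots,\att_k\}$; by clause~(ii) of $k$-exclusiveness this set extends to an attack-relation satisfying $P$, which is what agent $i$ reports.

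In the profile so obtained, every $\att_l$ has the winning supporter set $W'_l\in\mathcal{W}$, hence $\att_l\in F(\prof{\attacks})$ for all $l$, i.e.\ $\{\att_1,\dots,\att_k\}\subseteq F(\prof{\attacks})$. But each of the $n$ individual attack-relations satisfies $P$ and $F$ preserves $P$, so $P(F(\prof{\attacks}))$ holds --- contradicting clause~(i) of $k$-exclusiveness, which says that no attack-relation satisfying $P$ contains all of $\att_1,\dots,\att_k$. This contradiction yields $\bigcap\mathcal{W}\neq\emptyset$, so some agent has veto powers.

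The step I expect to be the main obstacle is the construction in the second paragraph: one must ensure that the attack-relation obtained for agent $i$ from clause~(ii), while satisfying $P$, does not additionally include distinguished attacks beyond those prescribed by the $W'_l$ --- otherwise the supporter set of some $\att_l$ could grow past $W'_l$ and escape $\mathcal{W}$. This is exactly where clauses~(i) and~(ii) must be combined (note that for an ``almost full'' proper subset of $\{\att_1,\dots,\att_k\}$, clause~(i) pins the realised set of distinguished attacks down exactly, so those subsets are the safe ones to aim for), and it is also the only place where $k\ge n$ is used: it guarantees enough distinguished attacks to absorb the (up to $n$) winning coalitions of empty intersection. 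Everything else --- the passage to $\mathcal{W}$, the counting of coalitions, and the closing contradiction --- is routine.
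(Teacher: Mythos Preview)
Your approach is the paper's: reduce $F$ to its family $\mathcal{W}$ of winning coalitions via neutrality and independence, assume $\bigcap\mathcal{W}=\emptyset$, and manufacture a profile in which every $\att_l$ is supported by a winning coalition while each individual relation satisfies~$P$. You go directly from $\bigcap\mathcal{W}=\emptyset$ to the contradiction by picking one witness $W_i\not\ni i$ per agent and padding to $k$ coalitions; the paper instead first proves the intermediate claim that \emph{any} $k$ winning coalitions have nonempty intersection, and then shows (by a greedy shrinking argument) that if the whole family has empty intersection then already at most $n\leq k$ of them do. These are minor reorganisations of the same argument.

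The subtlety you flag in your final paragraph is genuine, and the paper glosses over exactly the same point: clause~(ii) of $k$-exclusiveness only promises some $\attacks_i$ with $S_i\subseteq(\attacks_i)$, not with $(\attacks_i)\cap\{\att_1,\dots,\att_k\}=S_i$, so the realised supporter set of some $\att_l$ may properly contain the intended winning coalition and, absent monotonicity, drop out of~$\mathcal{W}$. The paper simply asserts that ``exactly the agents in $C_j$ accept attack~$\att_j$'' and that ``this construction indeed is possible'' without addressing exact realisability. Your almost-full idea does force exact realisation of the distinguished attacks (clause~(i) keeps the one omitted attack out), but then the supporter sets become $N$ or $N\setminus\{i\}$, and without monotonicity there is no reason these lie in~$\mathcal{W}$; so this trades one unjustified step for another. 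In short, your write-up is at least as careful as the paper's here. The clean repair is to read clause~(ii) as guaranteeing an $\attacks$ with $(\attacks)\cap\{\att_1,\dots,\att_k\}=S$; this stronger reading holds in both of the paper's applications (acyclicity and nonemptiness of the grounded extension), where one may simply take $(\attacks)=S$.
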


\begin{proof}
Let $k\geq n$, let $P$ be an AF-property that is \emph{$k$-exclusive}, and let $F$ be an aggregation rule that is \emph{neutral} and \emph{independent}. We need to show that, if $F$ preserves $P$, then $F$ must give some agents the power to veto the collective acceptance of attacks.

First, observe that, if an aggregation rule~$F$ is neutral and independent, then we can describe $F$ by listing all the coalitions $C\subseteq N$ for which it is the case that, whenever exactly the agents in $C$ support an attack~$\att$, then $\att$ must be collectively accepted. Indeed, independence says that acceptance of an attack should only depend on its supporters, and neutrality adds that this dependence must be the same for all attacks. More formally, there exists a family of \emph{winning coalitions} $\mathcal{W}\subseteq 2^N$ such that, for all profiles $\prof{\attacks}$ and all potential attacks $\att\in\Arg\times\Arg$, the following relationship holds: 
\begin{center}
$\att\in F(\prof{\attacks})$ 
\;\; if and only if \;\;
$N^{\prof{\attacks}}_\att\in\mathcal{W}$
\end{center}

Recall that $i\in N$ having veto powers under $F$ means that $F(\prof{\attacks})\subseteq(\attacks_i)$ for every profile~$\prof{\attacks}$. We can now show that an agent~$i\in N$ has veto powers, if she is a member of all winning coalitions:
\begin{center}
$\displaystyle i\in\bigcap_{C\in\mathcal{W}} C$
\;\; implies \;\;
$F(\prof {\attacks}) \subseteq (\attacks_i)$ for all profiles $\prof{\attacks}$ 
\end{center}

If $\bigcap_{C\in\mathcal{W}} C = \emptyset$, then the above claim holds vacuously.
Otherwise, take any attack $\att\in F(\prof{\attacks})$. As $\att$ got accepted, $N^\prof{\attacks}_\att$ must be a winning coalition, i.e., $N^{\prof{\attacks}}_\att \in \mathcal{W}$ and therefore $i\in N^{\prof{\attacks}}_\att$. But this is just another way of saying $\att \in (\attacks_i)$, so we are done.

Next, we are going to show that the fact that $F$ preserves the $k$-exclusive AF-property $P$ implies that the intersection of any $k$ winning coalitions must be nonempty:
\begin{center}
$C_1\cap\cdots\cap C_k\not=\emptyset$ for all $C_1,\ldots,C_k\in\mathcal{W}$
\end{center} 

For the sake of contradiction, assume there do exist winning coalitions $C_1,\ldots,C_k\in\mathcal{W}$ such that $C_1 \cap \cdots \cap C_k = \emptyset$. We construct a profile ${\prof{\attacks}}=(\attacks_1,\ldots,\attacks_n)$ with $P(\attacks_i)$ for all $i\in N$ as follows: for every $j\in\{1,\ldots,k\}$, exactly the agents in $C_j$ accept attack~$\att_j$ (for all other attacks, it is irrelevant which agents accept them). As no agent is a member of all $k$ winning coalitions, no agent accepts all $k$ attacks, so this construction indeed is possible. However, as each of the $k$ attacks is supported by a winning coalition, they all get accepted, i.e., $\{\att_1,\ldots,\att_k\} \subseteq F(\prof{\attacks})$, meaning that the outcome does \emph{not} satisfy~$P$. Thus, we have found a contradiction to our assumption of $F$ preserving~$P$ and are done.

Let us briefly recap where we are at this point.
We know that $F$ is characterised by a family of winning coalitions $\mathcal{W}$.
We also know that $C_1\cap\cdots\cap C_k\not=\emptyset$ for all $C_1,\ldots,C_k\in\mathcal{W}$. 
We need to show that some agents have veto powers, and we know that this is the case if we can prove that $C^{(1)}\cap\cdots\cap C^{(\ell)} \not= \emptyset$, where $\{C^{(1)},\ldots,C^{(\ell)}\}$ is some enumeration of the coalitions in $\mathcal{W}$.
Thus, we are done, if we can show that $C_1\cap\cdots\cap C_k\not=\emptyset$ for \emph{all} $C_1,\ldots,C_k\in\mathcal{W}$ implies $C^{(1)}\cap\cdots\cap C^{(\ell)} \not= \emptyset$.
We are going to prove the contrapositive, namely that the following holds for \emph{some} $C_1,\ldots,C_k\in\mathcal{W}$:
\begin{center}
$C^{(1)}\cap\cdots\cap C^{(\ell)} = \emptyset$
\;\; implies \;\;
$C_1\cap\cdots\cap C_k = \emptyset$ 
\end{center}

In words, we need to show that in case the intersection of \emph{all} winning coalitions is empty, then so is at least one intersection of just $k$ winning coalitions.

Recall that we have assumed $k\geq n$.
We construct a set $\mathcal{W}'\subseteq\mathcal{W}$ of $k$ (or fewer) winning coalitions as follows.
Initially, set $\mathcal{W}' := \emptyset$.
Then, for every $j$ from $1$ to $\ell$ in turn, add $C^{(j)}$ to $\mathcal{W}'$ if and only if the following condition is satisfied:\footnote{By convention, let $\bigcap_{C\in\emptyset} C = N$, i.e., the intersection of \emph{no} winning coalitions is defined as the universe~$N$ of \emph{all} agents.} 
\begin{eqnarray*}
\left( C^{(j)} \;\cap \bigcap_{C\in\mathcal{W}'} \!\! C \right) 
& \subsetneq & 
\left( \bigcap_{C\in\mathcal{W}'} \!\! C \right)
\end{eqnarray*}

Thus, every additional $C^{(j)}$ is selected only if it causes the removal of at least one further agent from the intersection. As there are only $n$ agents, we therefore will pick at most $n$ coalitions. Hence, we will indeed arrive at a family $\mathcal{W}'$ of $n$ or fewer---and thus certainly at most $k$---winning coalitions, the intersection of which is empty. This completes the proof.
\end{proof}

It now suffices to show that the property of having a nonempty grounded extension is an $|\Arg|$-exclusive property to obtain the following result (the proof of this fact can be found in the appendix):

\begin{theorem}\label{thm:nonempty-veto}
If $|\Arg|\geq n$, then under any neutral and independent aggregation rule~$F$ that preserves nonemptiness of the grounded extension at least one agent must have veto powers.
\end{theorem}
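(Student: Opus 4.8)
The plan is to deduce Theorem~\ref{thm:nonempty-veto} from Theorem~\ref{thm:k-exclusive-veto}. That theorem applies whenever the property under consideration is $k$-exclusive for some $k \geq n$, and here we are given $|\Arg| \geq n$, so it suffices to show that \emph{nonemptiness of the grounded extension} is an $|\Arg|$-exclusive AF-property. The conclusion that some agent has veto powers then follows immediately by instantiating Theorem~\ref{thm:k-exclusive-veto} with $P$ equal to this property and $k = |\Arg|$.

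To establish $|\Arg|$-exclusiveness, I would first appeal to the characterisation recorded in Section~\ref{sec:argumentation}: the grounded extension of $\langle\Arg,\attacks\rangle$ is nonempty if and only if at least one argument in $\Arg$ is unattacked. This reduces the task to a purely combinatorial statement about the attack graph. Writing $\Arg = \{A_1,\ldots,A_m\}$ with $m = |\Arg|$, I would take as the $m$ distinguished attacks those forming a directed cycle that visits every argument, namely $\att_i = (A_i \attacks A_{i+1})$ for $1 \leq i < m$ together with $\att_m = (A_m \attacks A_1)$. These are $m$ pairwise distinct attacks, each irreflexive since it runs between two different arguments.

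It then remains to verify the two clauses of the definition. For clause~$(i)$: in any attack-relation containing all of $\att_1,\ldots,\att_m$, every argument $A_j$ is attacked by its predecessor in the cycle, so no argument is unattacked and hence the grounded extension is empty; thus no such attack-relation satisfies the property. For clause~$(ii)$: given a proper subset $S \subsetneq \{\att_1,\ldots,\att_m\}$, choose an index $i$ with $\att_i \notin S$ and consider the attack-relation consisting of exactly the attacks in $S$ (which is a legitimate, irreflexive attack-relation, being a subset of the cycle). Within the cycle, $\att_i$ is the unique incoming attack of $A_{i+1}$, so $A_{i+1}$ is unattacked in this relation, and therefore its grounded extension is nonempty --- the required witness.

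The one step requiring a little care is clause~$(ii)$: it is crucial that the $m$ chosen attacks form a cycle, so that each argument has exactly one incoming attack among them; for an arbitrary set of $m$ attacks, deleting a single one need not leave any argument unattacked, and the argument would break down. (Degenerate cases with too few arguments to admit such a cycle fall outside the standing assumptions and, as the paper notes, are not treated here.) With $|\Arg|$-exclusiveness in hand, Theorem~\ref{thm:k-exclusive-veto} delivers the result at once.
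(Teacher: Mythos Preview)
Your proposal is correct and follows essentially the same approach as the paper: reduce to Theorem~\ref{thm:k-exclusive-veto} by showing that nonemptiness of the grounded extension is $|\Arg|$-exclusive, via the equivalence with ``some argument is unattacked'' and the directed Hamiltonian cycle $\att_i = A_i \attacks A_{i+1}$ (indices mod $m$), then verifying the two clauses exactly as you describe. The paper's proof is the same construction with the same verification.
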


It is not difficult to prove that the converse holds as well: all rules that grant veto powers to at least one agent preserve nonemptiness of the grounded extension. 

\subsection{Acyclicity and Coherence}

%Acyclicity is a property that is $k$-exclusive for every $k\in\{2,\ldots,|\Arg|\}$. To see this, consider the case where the attack relations $\{e_1,\ldots,e_k\}$ form a cycle, and observe that the shortest (proper) cycle has length~2, while the longest cycle visits every vertex exactly once and thus has length~$|\Arg|$.

Finally, if we apply our techniques to the properties of acyclicity and coherence, we obtain the following results (the proofs of which can be found in the appendix):\footnote{Theorem~\ref{thm:acyclic-veto} was anticipated in the work of Tohm\'e et al.~\cite{TohmeEtAlFoIKS2008}, who make a similar claim, 
%stated in terms of \emph{decisive coalitions}, 
but without appealing to the neutrality axiom. 
%While it is difficult to interpret their claim in terms of the precise impact it has on aggregation rules that preserve acyclicity, 
We stress that Theorem~\ref{thm:acyclic-veto} cannot be strengthened by dropping neutrality. Indeed, there are rules that preserve acyclicity, that are independent (but not neutral), and that do not give veto powers to any of the agents. An example, for $N=\{1,2\}$ and $\Arg=\{A,B\}$, is the rule that accepts $A\attacks B$ if at least one agent does and that accepts $B\attacks A$ if both agents do.}
% better example, w/o even "local" veto powers?

\begin{theorem}\label{thm:acyclic-veto}
If $|\Arg|\geq n$, then under any neutral and independent aggregation rule~$F$ that preserves acyclicity at least one agent must have veto powers.
\end{theorem}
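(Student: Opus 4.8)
The plan is to obtain Theorem~\ref{thm:acyclic-veto} as a corollary of Theorem~\ref{thm:k-exclusive-veto}, in exactly the way Theorem~\ref{thm:nonempty-veto} is obtained from it: it suffices to show that acyclicity is a $k$-exclusive AF-property for some $k\geq n$. Concretely, since $|\Arg|\geq n$ by hypothesis, I would fix $n$ pairwise distinct arguments $A_1,\ldots,A_n\in\Arg$ and take the $n$ attacks $\att_1=(A_1,A_2)$, $\att_2=(A_2,A_3)$, \ldots, $\att_{n-1}=(A_{n-1},A_n)$, and $\att_n=(A_n,A_1)$. These are pairwise distinct and together form a directed cycle of length~$n$ through $A_1,\ldots,A_n$. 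The claim is that acyclicity is $n$-exclusive with respect to this choice of attacks.

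Verifying the two defining conditions is routine. For condition~$(i)$: any attack-relation $\attacks$ with $\{\att_1,\ldots,\att_n\}\subseteq(\attacks)$ contains the cycle $A_1\attacks A_2\attacks\cdots\attacks A_n\attacks A_1$ and hence is not acyclic, so no acyclic attack-relation contains all $n$ attacks. For condition~$(ii)$: given a proper subset $S\subsetneq\{\att_1,\ldots,\att_n\}$, take the attack-relation $\attacks$ with $(\attacks)=S$; it is a legitimate irreflexive relation (each $\att_j$ joins two distinct arguments), it satisfies $S\subseteq(\attacks)$ trivially, and it is acyclic --- a proper subgraph of a directed cycle, obtained by deleting at least one of its edges, has every vertex of in- and out-degree at most~$1$ and contains no cycle (the only cyclic subgraph of a directed $n$-cycle is the whole cycle itself), so it is a disjoint union of directed paths. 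Hence acyclicity is $n$-exclusive, and applying Theorem~\ref{thm:k-exclusive-veto} with $k:=n$ gives the statement: every neutral and independent aggregation rule that preserves acyclicity grants veto powers to at least one agent.

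There is no substantial obstacle once Theorem~\ref{thm:k-exclusive-veto} is in hand; the two points worth flagging are that the hypothesis $|\Arg|\geq n$ is used precisely to fit an $n$-long simple cycle inside $\Arg$ (the analogue of how $|\Arg|\geq n$ enters the proof of Theorem~\ref{thm:nonempty-veto}), and that condition~$(ii)$ rests on the elementary observation that deleting even a single edge of a directed cycle destroys all cycles. The degenerate cases $n\leq 2$, where the ``$n$-cycle'' collapses to a mutual attack or less, behave as expected, and we suppress them in keeping with our treatment of boundary conditions elsewhere.
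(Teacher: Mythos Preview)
Your proposal is correct and follows essentially the same route as the paper: both reduce the claim to Theorem~\ref{thm:k-exclusive-veto} by exhibiting a cycle among distinct arguments and checking the two conditions in the definition of $k$-exclusiveness. The only cosmetic difference is that the paper chooses $k=|\Arg|$ (and in fact notes that acyclicity is $k$-exclusive for every $k\in\{2,\ldots,|\Arg|\}$), whereas you take $k=n$ directly; either choice works under the hypothesis $|\Arg|\geq n$.
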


%A result which is similar to Theorem~\ref{thm:acyclic-veto} has been introduced by \cite{TohmeEtAlFoIKS2008}. Their study shows the preservation of acyclicity under social choice theory from two sides. On the one side, an aggregation rule which associates with \emph{proper prefilter}\footnote{A proper prefilter is a set of decisive sets which satisfies \emph{unanimity}, \emph{monotonicity}, \emph{the intersection of its elements is nonempty}, \emph{non-dictatorship}.} could preserve acyclicity. On the other side, the preservation implies the existence of \emph{``hidden dictator''}\footnote{A hidden dictator is an agent which be given veto powers}.

%For the other direction, it is easy to find if someone has veto powers, then acyclicity and nonemptiness of the grounded extension will be preserved, i.e,. any aggregation rule $F$ under which at least one agent has veto powers preserves acyclicity and nonemptiness. This claim follows from the facts that, first, an agent has veto powers means the outcome is a part of its profile, second, every acyclic and nonempty AF is close under sub-argumentation framework.

%Recall that both coherence and acyclicity can reduce semantic ambiguity and coherence is weaker than acyclicity. An $\AF$ satisfies coherence if and only if every preferred extension of $\AF$ is stable. For coherence, we have the following impossibility result:

\begin{theorem}\label{thm:coherence-dictator}
For $|\Arg|\geq 4$, any unanimous, grounded, neutral, and independent aggregation rule~$F$ that preserves coherence must be a dictatorship.
\end{theorem}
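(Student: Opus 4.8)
The plan is to obtain Theorem~\ref{thm:coherence-dictator} as an instance of the result of Endriss and Grandi~\cite[Theorem~18]{EndrissGrandiAIJ2017} recalled before Theorem~\ref{thm:acceptability-dictator}: for $|\Arg|\geq 3$, every unanimous, grounded, neutral, and independent aggregation rule that preserves an AF-property which is both \emph{implicative} and \emph{disjunctive} must be a dictatorship. It therefore suffices to show that, for $|\Arg|\geq 4$, coherence is both implicative and disjunctive. In both constructions I would single out four designated arguments $A,B,C,D\in\Arg$ and leave every other argument isolated, i.e.\ involved in no attack of~$\Att$ or of any~$\att_i$. Since an isolated argument is unattacked, it lies in the grounded extension and hence in every preferred extension, and it must be an element of every stable extension; consequently each framework considered below is coherent if and only if its restriction to $\{A,B,C,D\}$ is, and that restriction is all I would need to analyse.

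For \emph{disjunctiveness}, take $\Att=\{(A,B),(B,C),(C,A)\}$, the three-cycle on $A,B,C$, together with $\att_1=(D,A)$ and $\att_2=(D,B)$. When $S=\emptyset$, the relevant framework is the three-cycle with $D$ isolated; its unique preferred extension $\{D\}$ attacks none of $A,B,C$ and so is not stable, hence $\Att$ is incoherent. Adding $\att_1$ makes $D$ an unattacked attacker of $A$, and one checks that $\{B,D\}$ is then the unique preferred extension and that it is stable; the case of $\att_2$ is symmetric; and with both attacks present, $\{C,D\}$ becomes the unique preferred extension and is again stable. Hence $P(\Att\cup S)$ holds precisely when $S\neq\emptyset$, as disjunctiveness requires.

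For \emph{implicativeness}, take $\Att=\{(C,A)\}$ with $\att_1=(A,B)$, $\att_2=(B,C)$, and $\att_3=(D,A)$. For every $S\subseteq\{\att_1,\att_2,\att_3\}$ other than $\{\att_1,\att_2\}$ and $\{\att_1,\att_2,\att_3\}$, the attack-relation $\Att\cup S$ restricted to $\{A,B,C,D\}$ is acyclic, hence coherent, as for acyclic frameworks the grounded extension is at the same time the unique preferred and the unique stable extension. For $S=\{\att_1,\att_2,\att_3\}$ we obtain the three-cycle plus the attack $D\attacks A$, which was already shown to be coherent in the previous step. Only for $S=\{\att_1,\att_2\}$ does $\Att\cup S$ reduce to the bare three-cycle $A\attacks B\attacks C\attacks A$ (with $D$ isolated), which is incoherent. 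So $P(\Att\cup S)$ holds if and only if $S\neq\{\att_1,\att_2\}$, which makes coherence implicative; combining this with the disjunctiveness claim and the Endriss--Grandi characterisation yields the theorem.

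I expect the main obstacle to be the careful, if routine, bookkeeping in the gadget verifications: for each of the small frameworks one must determine the preferred extensions (taking care that no admissible set is overlooked) and then test stability, and the reduction to the four-argument restriction must be phrased cleanly in the presence of arbitrarily many isolated extra arguments. Neither step is conceptually difficult, but both are easy to get slightly wrong.
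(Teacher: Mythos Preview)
Your proposal is correct and takes essentially the same approach as the paper: invoke the Endriss--Grandi characterisation by exhibiting a three-cycle gadget (with an external unattacked ``breaker'' node) to witness both disjunctiveness and implicativeness of coherence. Your constructions coincide with the paper's up to a relabelling of the four arguments, and your reuse of the disjunctiveness gadget to handle the $S=\{\att_1,\att_2,\att_3\}$ case in the implicativeness verification is a neat economy.
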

%To prove the above result, we only need to show, coherence is both \emph{implicative} and \emph{disjunctive}. The details of the proof can be found in appendix.

Thus, surprisingly, %somewhat counterintuitively, 
even though acyclicity is a stronger property than coherence, it is easier to preserve under aggregation.

%%%%%%%%%%%%%%%%%%%%%%%%%%%%%%%%%%%%%%%%%%%%%%%%%%%%%%%%%%%%%%%%%%%%%%%%%%%%%%%%
\section{Related Work}\label{sec:related-work}
%%%%%%%%%%%%%%%%%%%%%%%%%%%%%%%%%%%%%%%%%%%%%%%%%%%%%%%%%%%%%%%%%%%%%%%%%%%%%%%%

In this section, we briefly review related work on the aggregation of abstract argumentation frameworks.
Coste-Marquis et al.~\cite{CosteMarquisEtAlAIJ2007} were the first to address this problem, but without making explicit reference to social choice theory. Instead, they focus on 
%a family of sophisticated 
aggregation rules that minimise the \emph{distance} between the input argumentation frameworks and the output argumentation framework.

Tohm\'e et al.~\cite{TohmeEtAlFoIKS2008} were the first to explicitly use social choice theory to analyse the aggregation of argumentation frameworks. Their focus is on the preservation of acyclicity and they show that qualified majority rules (which grant veto powers to some agents) will always preserve this property. 

%could also mention \cite{BodanzaAudayECSQARU2009},

Dunne et al.~\cite{DunneEtAlCOMMA2012} define several preservation requirements on aggregation rules that directly refer to the semantics of the argumentation frameworks concerned. This includes variants of what we call preservation of extensions (``$\sigma$-unanimity'') and preservation of argument acceptability (``credulous acceptance unanimity''). Their focus is on analysing the computational complexity of deciding whether a given aggregation rule has a given property, rather than on the axiomatic method.
In follow-up work, Delobelle et al.~\cite{DelobelleEtAlIJCAI2015} establish for several concrete rules whether or not they satisfy the preservation requirements introduced by Dunne et al.~\cite{DunneEtAlCOMMA2012}.

While Endriss and Grandi~\cite{EndrissGrandiAIJ2017} explicitly mention abstract argumentation as a possible domain of application for the model of graph aggregation they develop, they do not present any technical results related to argumentation.

Airiau et al.~\cite{AiriauEtAlAAMAS2016} introduce the concept of the \emph{rationalisability} of a profile of argumentation frameworks. A profile is rationalisable if the diversity of views it contains can be explained in terms of $(i)$~an underlying factual argumentation framework shared by all agents and $(ii)$~everyone's individual preferences. Thus, their work is concerned with understanding what kind of profiles a good aggregation rule should be able to deal with, rather than with aggregation itself.

Finally, social choice theory has also been used to analyse the aggregation of different \emph{extensions} for \emph{one} argumentation framework see, (e.g.,~\cite{CaminadaPigozziJAAMAS2011,RahwanTohmeAAMAS2010}). We note that this problem is different from the one studied here and refer to the survey by Bodanza et al.~\cite{BodanzaEtAlAC2017} for a comparison. 

%%%%%%%%%%%%%%%%%%%%%%%%%%%%%%%%%%%%%%%%%%%%%%%%%%%%%%%%%%%%%%%%%%%%%%%%%%%%%%%%
\section{Conclusion}\label{sec:conclusion}
%%%%%%%%%%%%%%%%%%%%%%%%%%%%%%%%%%%%%%%%%%%%%%%%%%%%%%%%%%%%%%%%%%%%%%%%%%%%%%%%

% SUMMARY OF CONTRIBUTION
Using a variety of techniques, we have attempted to paint a clear picture of the capabilities and limitations of simple aggregation rules regarding the preservation of properties 
%that are of interest in the context of 
related to the semantics of abstract argumentation frameworks. While the significance of this issue and the promise of social choice theory for its resolution have previously been emphasised in the work of several authors~\cite{BodanzaEtAlAC2017,DelobelleEtAlIJCAI2015,DunneEtAlCOMMA2012,TohmeEtAlFoIKS2008}, this is the first systematic analysis of its kind. Our results show that only the most basic of properties, namely conflict-freeness, is preserved by essentially all rules. More demanding properties require either the nomination rule, a rule granting some agents veto powers, or a rule that is dictatorial. 

% EVALUATION: SIMPLE AXIOMS LIMIT RELEVANCE
We stress that these results only apply to simple rules, in particular, to rules that satisfy the axiom of independence. An alternative route, the one chosen by Coste-Marquis et al.~\cite{CosteMarquisEtAlAIJ2007}, is to use distance-based rules (which violate independence). Such rules can be designed so as to guarantee specific properties of the outcome, so the question of preservation does not arise. On the downside, distance-based rules are computationally intractable~\cite{EndrissEtAlJAIR2012,HemaspaandraEtAlTCS2005,KoniecznyEtAlAIJ2004}.
%,
% EVALUATION: COMMON SET OF ARGUMENTS
We also stress that our results are based on the assumption that all agents report attack-relations over a single \emph{common} set of arguments. Richer models, where different agents may be aware of different sets of arguments, are clearly of great interest as well.

There are multiple directions in which to extend this work. 
First, our conjecture regarding the preservation of preferred and complete extensions should get settled.
Second, one could study further properties of argumentation frameworks. We have already hinted at the possibility of distinguishing between argument acceptability in \emph{some} extensions (the property studied in this paper) and argument acceptability in \emph{all} extensions. 
%Another interesting property is the uniqueness of extensions. 
Third, we should eventually go beyond the four classical semantics introduced by Dung~\cite{DungAIJ1995} and also consider others, such as the \emph{semi-stable semantics}~\cite{CaminadaCOMMA2006} or the \emph{ideal semantics}~\cite{DungEtAlAIJ2007}.
Fourth, one could vary the axioms imposed on aggregation rules. The most immediately promising direction here would be to investigate whether neutrality can be replaced by additional preservation requirements, in analogy to results in preference and graph aggregation~\cite{Arrow1963,EndrissGrandiAIJ2017}. 
Finally, it would be interesting to investigate the strategic incentives of agents who are reporting an argumentation framework to an aggregation rule and whose objective might be to get a certain argument accepted. 

% Ideal semantics (Dung et al AIJ 2007): A set $S\subseteq\Arg$ is called an ideal extension if $S$ is admissible and it is contained in every preferred set of arguments.

% Semi-stable semantics (Caminada, COMMA-2006): A set $S\subseteq\Arg$ is called a semi-stable extension if $S$ is a complete extension and the disjunction [??] of $S$ and the set of arguments attacked by $S$ is maximal.

% Could also mention eager semantics (Caminada, BNAIC-2007): A set $S\subseteq\Arg$ is called an eager extension if $S$ is the maximal (w.r.t.\ set-inclusion) admissible set that is a subset of each semi-stable extension.

%One natural and interesting direction for future work is combining argumentation framework aggregation with some other areas such as belief merging. By combining with these areas such as defining belief-related AF-properties, we not only will have a brand new perspective on aggregation of argumentation frameworks but also some more interesting and intuitive results are expecting.

%%%%%%%%%%%%%%%%%%%%%%%%%%%%%%%%%%%%%%%%%%%%%%%%%%%%%%%%%%%%%%%%%%%%%%%%%%%%%%%%
%\bibliographystyle{abbrvnat}
%\bibliography{aaaf}
%%%%%%%%%%%%%%%%%%%%%%%%%%%%%%%%%%%%%%%%%%%%%%%%%%%%%%%%%%%%%%%%%%%%%%%%%%%%%%%%

\paragraph{Acknowledgements.}
We would like to thank Sirin Botan, Umberto Grandi, Ronald de Haan, and Zoi Terzopoulou for numerous enlightening discussions on the material presented in this paper and three anonymous reviewers for their constructive feedback.

\bibliographystyle{eptcsini}
\bibliography{aaaf}

\appendix
\section*{Appendix: Remaining Proofs}

In this appendix, we present the proofs omitted from the body of the paper. All of the proofs in this appendix have the same structure: they show that a given semantic AF-property of interest has certain meta-properties, for which a general (impossibility) result is available.

\subsection*{Proof of Theorem~\ref{thm:acceptability-dictator}}

Recall that, for each of the four semantics, we have to show that, for $|\Arg|\geq 4$, the AF-property of \emph{argument acceptability} is both \emph{implicative} and \emph{disjunctive}. We are able to use the same construction in all four cases.
Let $P$ be the AF-property of argument acceptability under either the grounded, the stable, the preferred, or the complete semantics. Furthermore, let $\Arg = \{A,B,C,D,\ldots\}$ be a set of at least four arguments.

\begin{figure}[t]
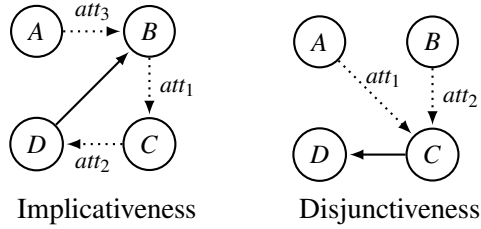

\[
\begin{tabular}{c@{\qquad\quad}c}
\begin{AFfour}[1] 
\drawattack{D}{B}
\drawlabelledattack{B}{C}{$\att_1$}{right}
\drawlabelledattack{C}{D}{$\att_2$}{below}
\drawlabelledattack{A}{B}{$\att_3$}{above}
\end{AFfour} &
\begin{AFfour}[1] 
\drawattack{C}{D}
\drawlabelledattack{A}{C}{$\;\;\;\att_1$}{above}
\drawlabelledattack{B}{C}{$\att_2$}{right}
\end{AFfour} \\
Implicativeness & Disjunctiveness
\end{tabular}
\]
\vspace*{-5pt}
\caption{Scenarios used in the proof of Theorem~\ref{thm:acceptability-dictator}.\label{fig:proof-acceptability-dictator}}
\end{figure}

Let us first show that $P$ is implicative. 
Let $C$ be the argument under consideration.
Let $\Att = \{D \attacks B\}$, $\att_1 = B \attacks C$, $\att_2 = C \attacks D$, and $\att_3 = A \attacks B$.
This scenario is sketched in the lefthand part of Figure~\ref{fig:proof-acceptability-dictator}.
Now consider the argumentation frameworks of the form $\langle\Arg,\Att\cup S\rangle$ with $S\subseteq\{\att_1,\att_2,\att_3\}$.
If $S\subseteq\{\att_2,\att_3\}$, then $C$ is not attacked by any other argument.
If $S=\{\att_1\}$ or $S=\{\att_1,\att_3\}$, then $C$ is defended by $D$, which is not attacked by any other argument.
If $S=\{\att_1,\att_2,\att_3\}$, then $C$ is defended by $A$, which is not attacked by any other argument.
Thus, in all of these seven cases, either $C$ is not attacked by any other argument or it is defended by an argument that is not attacked by any other argument, i.e., in all cases $C$ is acceptable under the grounded, the stable, the preferred, and the complete semantics.
On the other hand, if $S=\{\att_1,\att_2\}$, then $\{B, C, D\}$ forms an isolated odd-length cycle. This means that all of $B$, $C$, and $D$ will be unacceptable under the grounded, the stable, the preferred, and the complete semantics.
We have thus found a set of attacks $\Att$ and three individual attacks $\att_1$, $\att_2$, $\att_3$ such that $P(\Att \cup S)$ if and only if $S\not=\{\att_1,\att_2\}$. Hence, $P$ is an implicative AF-property.

Next, we show that $P$ is also disjunctive. 
Let $D$ be the argument under consideration.
Let $\Att = \{C \attacks D\}$, $\att_1 = A \attacks C$, and $\att_2 = B \attacks C$.
This scenario is depicted on the righthand side of Figure~\ref{fig:proof-acceptability-dictator}.
Consider all argumentation frameworks $\langle\Arg,\Att\cup S\rangle$ with $S\subseteq\{\att_1,\att_2\}$.
If $S=\{\att_1\}$, then $D$ is defended by $A$.
If $S=\{\att_2\}$, then $D$ is defended by $B$.
If $S=\{\att_1,\att_2\}$, then $D$ is defended by both $A$ and $B$.
In all three cases, $D$ is defended by some argument that is not attacked by any other argument. Thus, $D$ is acceptable under each of our four semantics.
However, if $S=\emptyset$, then $D$ is attacked by $C$ and not defended by any other argument, which means that $D$ is unacceptable under all four semantics.
To summarise, we have seen that $P(\Att \cup S)$ if and only if $S \not= \emptyset$. Hence, $P$ is a disjunctive AF-property.
\myendofproof

%\subsection*{Proof of Theorem~\ref{thm:grounded-conflictfree}}

%\subsection*{Proof of Theorem~\ref{thm:admissibility-nomination}}

\subsection*{Proof of Theorem~\ref{thm:grounded-extension-dictator}}

Recall that we need to show that, for $|\Arg|\geq 5$, the property~$P$ of \emph{being a grounded extension} is an AF-property that is both \emph{implicative} and \emph{disjunctive}. 
In fact, as we shall see, we can show disjunctiveness even in case $|\Arg|\geq 4$.
Let $\Arg = \{A,B,C,D,E,\ldots\}$.
For the proofs of both properties, we focus on $\Delta = \{A,C,E\}$ as the subset of arguments that may (or may not) form the grounded extension. 

We first show that $P$ is implicative.
Let $\Att = \{A \attacks B, C \attacks D\}$,
$\att_1 = B \attacks C$, $\att_2 = D \attacks A$, and $\att_3 = E \attacks D$.
This scenario is depicted on the lefthand side of Figure~\ref{fig:proof-grounded-extension-dictator}.
Consider all argumentation frameworks of the form $\AF=\langle\Arg,\Att\cup S\rangle$ with $S\subseteq\{\att_1,\att_2,\att_3\}$, and for each of them the corresponding characteristic function $f_\AF$.
If $S=\{\att_1\}$, then $f_\AF(\emptyset)=\{A,E\}$, $f_\AF^2(\emptyset)=\{A,C,E\}=\Delta$, and $f_\AF^3(\emptyset)=f_\AF^2(\emptyset)$. Thus, the grounded extension is $\Delta$ in this case.
Using the same kind of reasoning, it is easy to verify that the grounded extension is $\Delta$ whenever $S\not=\{\att,\att_2\}$.
On the other hand, if $S=\{\att_1,\att_2\}$, then we get $f_\AF(\emptyset)=\{E\}$ and $f_\AF^2(\emptyset)=f_\AF(\emptyset)$, i.e., the grounded extension now is $\{E\}$.
Thus, for our argumentation framework to have $\Delta$ as its grounded extension, we must insist that, if both $\att_1$ and $\att_2$ are accepted, then also $\att_3$ is accepted. Hence, $P$ is implicative.

\begin{figure}[t]
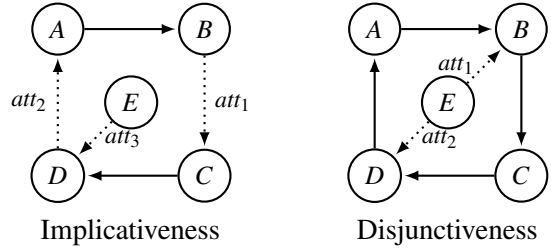

\[
\begin{tabular}{c@{\qquad\quad}c}
\begin{AFfour}[1.3]  
\node[draw,circle] (E) at (0.75,0.75) {$E$};
\drawattack{A}{B}
\drawattack{C}{D}
\drawlabelledattack{B}{C}{$\att_1$}{right}
\drawlabelledattack{D}{A}{$\att_2$}{left}
\drawlabelledattack{E}{D}{$\att_3$}{right}
\end{AFfour} &
\begin{AFfour}[1.3]
\node[draw,circle] (E) at (0.75,0.75) {$E$};
\drawattack{A}{B}
\drawattack{B}{C}
\drawattack{C}{D}
\drawattack{D}{A}
\drawlabelledattack{E}{B}{$\att_1$}{left}
\drawlabelledattack{E}{D}{$\att_2$}{right}
\end{AFfour} \\
Implicativeness & Disjunctiveness
\end{tabular}
\]
\vspace*{-5pt}
\caption{Scenarios used in the proof of Theorem~\ref{thm:grounded-extension-dictator}.\label{fig:proof-grounded-extension-dictator}}
\end{figure}

Next, we show that $P$ is also a disjunctive AF-property.
Let $\Att = \{A \attacks B,B \attacks C,C \attacks D,D \attacks A\}$,
$\att_1 = E \attacks B$, and $\att_2 = E \attacks D$.
This is shown on the righthand side of Figure~\ref{fig:proof-grounded-extension-dictator}.
Consider $\AF = \langle \Arg, \Att\cup S \rangle$ with $S\subseteq\{\att_1,\att_2\}$.
If $S=\{\att_1\}$, then $f_\AF(\emptyset)=\{E\}$, $f_\AF^2(\emptyset)=\{C,E\}$, $f_\AF^3(\emptyset)=\{A,C,E)=\Delta$, and $f_\AF^4(\emptyset)=f_\AF^3(\emptyset)$, i.e., $\Delta$ is the grounded extension of~$\AF$. By analogous reasoning, $\Delta$ is the grounded extension also for $S=\{\att_2\}$ and for $S=\{\att_1,\att_2\}$. However, for $S=\emptyset$, we get $f_\AF(\emptyset)=\{E\}$ and $f_\AF^2(\emptyset)=f_\AF(\emptyset)$, i.e., the grounded extension now is just $\{E\}$. Thus, $\Delta$ is the grounded extension if and only if $S\not=\emptyset$, meaning that $\att_1$ or $\att_2$ are accepted. Hence, $P$ is disjunctive.
\myendofproof

%\subsection*{Proof of Proposition~\ref{prop:nomination-stable}}

%\subsection*{Proof of Theorem~\ref{thm:k-exclusive-veto}}

\subsection*{Proof of Theorem~\ref{thm:nonempty-veto}}

To obtain the claim as a corollary to Theorem~\ref{thm:k-exclusive-veto}, we need to show that the property of an argumentation framework \emph{having a nonempty grounded extension} is a \emph{$k$-exclusive} AF-property for $k=|\Arg|$. 
Recall that having a nonempty grounded extension is equivalent to the property of \emph{having at least one argument that is not attacked} by any other argument. We are going to show that the latter property is $k$-exclusive for $k=|\Arg|$.

So let $k=|\Arg|$. Take an arbitrary enumeration $\{A^{(1)},\ldots,A^{(k)}\}$ of $\Arg$ and consider the set of attacks $\{\att_1,\ldots,\att_k\}$ with $\att_i := A^{(i)} \attacks A^{(i+1)}$ for $i<k$ and $\att_k := A^{(k)} \attacks A^{(1)}$. Clearly, this set of attacks meets our requirements:
$(i)$~if $\{\att_1,\ldots,\att_k\}\subseteq(\attacks)$, then $\attacks$ does not have the property of leaving at least one argument without an attacker and
$(ii)$~for every $S\subsetneq\{\att_1,\ldots,\att_k\}$ there exists an attack-relation $\attacks$ with $S\subseteq(\attacks)$, namely $S$ itself, that does leave one argument without an attacker.
\myendofproof

\subsection*{Proof of Theorem~\ref{thm:acyclic-veto}}

Recall that, if we can show that \emph{acyclicity} is an $|\Arg|$-exclusive AF-property, then the claim follows from Theorem~\ref{thm:k-exclusive-veto}. In fact, it is straightforward to show that acyclicity is a \emph{$k$-exclusive} property for every $k\in\{2,\ldots,|\Arg|\}$.
To see this, consider the case where the attacks $\{\att_1,\ldots,\att_k\}$ form a cycle, and observe that the shortest (proper) cycle has length~2, while the longest cycle visits every argument exactly once and thus has length~$|\Arg|$.
\myendofproof

\subsection*{Proof of Theorem~\ref{thm:coherence-dictator}}

Recall that we need to show that, for $|Arg|\geq 4$, \emph{coherence} is an AF-property that is both \emph{implicative} and \emph{disjunctive}. Let $P$ represent coherence and let $\Arg = \{A,B,C,D,\ldots\}$.

\begin{figure}[t]
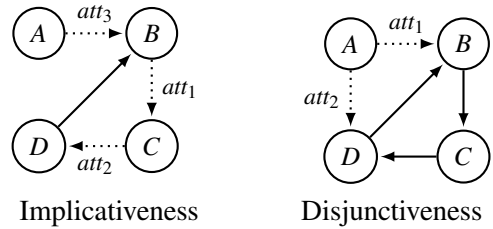

\[
\begin{tabular}{c@{\qquad\quad}c}
\begin{AFfour}[1] 
\drawattack{D}{B}
\drawlabelledattack{B}{C}{$\att_1$}{right}
\drawlabelledattack{C}{D}{$\att_2$}{below}
\drawlabelledattack{A}{B}{$\att_3$}{above}
\end{AFfour} &
\begin{AFfour}[1] 
\drawattack{B}{C}
\drawattack{C}{D}
\drawattack{D}{B}
\drawlabelledattack{A}{B}{$\att_1$}{above}
\drawlabelledattack{A}{D}{$\att_2$}{left}
\end{AFfour} \\
Implicativeness & Disjunctiveness
\end{tabular}
\]
\vspace*{-5pt}
\caption{Scenarios used in the proof of Theorem~\ref{thm:coherence-dictator}.\label{fig:proof-coherence-dictator}}
\end{figure}

Let us first show that $P$ is an implicative AF-property. 
Let $\Att = \{D \attacks B\}$, $\att_1 = B \attacks C$, $\att_2 = C \attacks D$, and $\att_3 = A \attacks B$.
This scenario is shown on the lefthand side of Figure~\ref{fig:proof-coherence-dictator} and is identical to the scenario used in the proof of Theorem~\ref{thm:acceptability-dictator}.
Now consider argumentation frameworks $\langle\Arg,\Att\cup S\rangle$ with $S\subseteq\{\att_1,\att_2,\att_3\}$.
If either $S=\{\att_1\}$, $S=\{\att_3\}$, $S=\{\att_1,\att_3\}$ or $S=\emptyset$, the only preferred extension is $\{A,C,D\}$, which is also stable.
If $S=\{\att_2\}$, the only preferred extension is $\{A,B,C\}$, which is also stable. 
If $S=\{\att_2,\att_3\}$ or $S=\{\att_1,\att_2,\att_3\}$, the only preferred extension is $\{A,C\}$ which once again also is stable.
Thus, in all seven cases, $\AF$ is coherent.
However, if $S=\{\att_1,\att_2\}$, the only preferred extension is $\{A\}$, which is not stable, as $B$, $C$, and $D$ are not attacked by $A$.
We have thus found a set of attacks $\Att$ and three individual attacks $\att_1$, $\att_2$, $\att_3$ such that $P(\Att \cup S)$ if and only if $S\not=\{\att_1,\att_2\}$. In other words, $P$ is an implicative AF-property.

Next, we show that $P$ is also a disjunctive AF-property. 
Let $\Att = \{B \attacks C, C \attacks D, D \attacks B\}$, $\att_1 = A \attacks B$, and $\att_2 = A \attacks D$.
This scenario is shown on the righthand side of Figure~\ref{fig:proof-coherence-dictator}.
Consider argumentation frameworks $\langle\Arg,\Att\cup S\rangle$ with $S\subseteq\{\att_1,\att_2\}$.
If $S=\{\att_1\}$ or $S=\{\att_1,\att_2\}$, the only preferred extension is $\{A,C\}$, which is also stable.
If $S=\{\att_2\}$, the only preferred extension is $\{A,B\}$, which again is also stable.
Thus, in all three cases every preferred extension is stable, i.e.,  $\AF$ is coherent.
On the other hand, if $S=\emptyset$, then the only preferred extension is $\{A\}$, which is not stable, as $B$, $C$, and $D$ not attacked by $A$.
To summarise, $P(\Att \cup S)$ if and only if $S \not= \emptyset$. Hence, $P$ is disjunctive.
\myendofproof

\end{document}